\definecolor{newcolor}{rgb}{.8,.349,.1}
\journal{Pattern Recognition Letters}
\newtheorem{theorem}{Theorem}
\newtheorem{lemma}[theorem]{Lemma}
\newcommand{\APPENDIX}{Appendix}
\begin{document}

\begin{frontmatter}
  \title{Estimating the standard error of cross-Validation-Based estimators of classifier performance}

  \author[WAY]{Waleed~A.~Yousef\corref{cor1}}
  \ead{wyousef@UVIC.ca, wyousef@fci.helwan.edu.eg}
  \cortext[cor1]{Corresponding Author}

  \address[WAY]{Ph.D., ECE Dep., University of Victoria, Canada;CS Dep.,Human Computer Interaction
    Laboratory (HCILAB), Faculty of Computers and Artificial Intelligence, Helwan University,
    Egypt.}

  \begin{abstract}
    First, we analyze the variance of the Cross Validation (CV)-based estimators used for estimating
    the performance of classification rules. Second, we propose a novel estimator to estimate this
    variance using the Influence Function (IF) approach that had been used previously very
    successfully to estimate the variance of the bootstrap-based estimators. The motivation for this
    research is that, as the best of our knowledge, the literature lacks a rigorous method for
    estimating the variance of the CV-based estimators. What is available is a set of ad-hoc
    procedures that have no mathematical foundation since they ignore the covariance structure among
    dependent random variables. The conducted experiments show that the IF proposed method has small
    RMS error with some bias. However, surprisingly, the ad-hoc methods still work better than the
    IF-based method. Unfortunately, this is due to the lack of enough smoothness if compared to the
    bootstrap estimator. This opens the research for three points: (1) more comprehensive simulation
    study to clarify when the IF method win or loose; (2) more mathematical analysis to figure out
    why the ad-hoc methods work well; and (3) more mathematical treatment to figure out the
    connection between the appropriate amount of ``smoothness'' and decreasing the bias of the IF
    method.
  \end{abstract}

  \begin{keyword}
    Cross Validation\sep CV\sep Uncertainty\sep Variance\sep Influence Function\sep Influence
    Curve\sep Components of Variance\sep Classification.
  \end{keyword}
\end{frontmatter}

%%% Local Variables:
%%% mode: latex
%%% TeX-master: "Yousef201XCV-II"
%%% End:

\section{Introduction}\label{sec:introduction}
\subsection{Background and Motivation}\label{sec:backgr-motiv}
As was introduced in \cite{Yousef2019LeisurelyLookVersionsVariants-arxiv}, we tried to set the different
Cross Validation (CV)-based estimators in one mathematical picture to understand the difference
among them. The present article aims at providing a rigorous framework for estimating the standard
error associated with those estimators. We published some of the preliminary experimentation early
in \cite{Yousef2009EstCVvariability}, without providing any theoretical foundation. Therefore the
present article is, in particular, the theoretical foundation and the full experimental extension to
\cite{Yousef2009EstCVvariability}; and, in general, a continuation to our methods of assessing
uncertainty of classifiers performance and their estimators
\citep{Yousef2013PAUC,Yousef2009EstCVvariability,Yousef2009NonparEstThreshold,Yousef2006AssessClass,Yousef2005EstimatingThe,Yousef2004ComparisonOf}.

\bigskip

The motivation for this research is that, as the best of our knowledge, the literature lacks a
rigorous method for estimating the variance of the CV-based estimators. What is available is a set
of ad-hoc procedures that have no mathematical foundation---since they ignore the covariance
structure among dependent random variables, i.e., the folds of the CV. One of these methods, e.g.,
is using the simple sample variance method among the folds. \cite{Bengio2004NoUnbiasedEstKCV} is an
early work to analyze this covariance structure of the CV.

\cite{Efron1997ImprovementsOnCross} is the first to use the IF approach to estimate
the standard error of their Leave One Out Bootstrap (LOOB) estimator that estimates the error rate
of a classification rule. \cite{Yousef2005EstimatingThe} then extended the same approach to estimate
the standard error of their Leave Pair Out Bootstrap (LPOB) estimator that estimate the Area Under
the ROC Curve (AUC) of a classification rule. The \textit{smoothness} issue of booth estimators,
LOOB and LPOB, is at the kernel of success of the IF approach. \textit{Smoothness} is simply the
averaging implied on each observation from the design of the resampling mechanism of both
bootstrap-based estimators. \textit{Smoothness} is well explained in~\cite{Yousef2019AucSmoothness-arxiv}.

On the other hand, In \cite{Yousef2019LeisurelyLookVersionsVariants-arxiv}, we show that how two different
versions of the CV estimators inherits this \textit{smoothness} feature from the resampling
mechanism of the CV. These two versions are the Monte-Carlo $K$-fold Cross Validation (CVKM) and the
Repeated $K$-fold Cross Validation (CVKR)---both converge a.s. to the same value as we proved in
\cite{Yousef2019LeisurelyLookVersionsVariants-arxiv}. This motivated us to develop an IF-based method for
estimating the standard error of these two CV-based estimators, analogous to the bootstrap based
estimators. This way, we can abandon the ad-hoc methods mentioned above. Then, we conducted a
simulation study to compare the true standard error (obtained from Monte Carlo), the ad-hoc
estimators (mentioned above), and our IF proposed estimator. The latter, exhibits some bias with
small RMS. However, and surprisingly, the ad-hoc methods work better than the IF estimator! The
possible explanation for that, as appears in Sections \ref{SecIFest} and \ref{SecSimulation}, is the
lack of ``rich'' (or enough) smoothness of the CV-based estimators if compared to the
bootstrap-based estimators LOOB and LPOB. This opens the research venue in this regards to three
further main points (Section~\ref{SecDisc}).

First, more comprehensive computational study is needed to compare among the ad-hoc estimators and
the IF-based estimator to investigate when they really win or loose. Second, more mathematical
investigation is needed to know why the ad-hoc estimators work well while they ignore the covariance
structure among the CV folds. Third, for the sake of more understanding of the nature of the
problem, it is necessary to derive a mathematical expression for the ``amount of smoothness''
(measured in terms of the number of resampling permutations) necessary for decreasing the bias of
the IF method, to win over ad-hoc methods, as its bootstrap counterpart does.

\subsection{Formalization and Notation}\label{sec:form-notat}
After the verbal introduction above it is pedagogical to introduce the mathematical background and
notation before delving into the analysis in subsequent sections. As was introduced in
\cite{Yousef2019LeisurelyLookVersionsVariants-arxiv}, from which we borrow all notations and build on all
results, we always advocate for using the Area Under the ROC curve (AUC) as classifiers performance
measure for that it is prevalence (threshold) independence. We will use the following notation
\begin{subequations}
  \begin{align}
    \widehat{AUC} &  =\frac{1}{n_{1}n_{2}}\sum_{i}\sum_{j} \psi\left(  h_{\mathbf{X}}\left(x_{i}\right)  ,h_{\mathbf{X}}\left(  y_{j}\right)  \right),\label{eq:AUC}\\
    \psi(a,b)&=\left\{%
               \begin{array}[c]{ccc}
                 1 &  & a>b\\
                 1/2 &  & a=b\\
                 0 &  & a<b
               \end{array}\right.,
  \end{align}
\end{subequations}
where, $h_{\mathbf{X}}(x_i)$ is the score given to the testing observation $x_i$, by the classifier
trained on the set $\mathbf{X}$; and the testing set is composed of $n_1$ and $n_2$ observations from
both classes.

\bigskip

When there is only one available dataset, the training and testing sets can be pseudo-produced by CV
resampling mechanism, which is formalized as follows. The mapping $\mathcal{K}$, assigns each
observation to one partition. Therefore its inverse gives the set of all observations belonging to a
particular partition.  Formally, we say
$\mathcal{K}:\left\{ 1,\ldots,n\right\} \mapsto\left\{ 1,\ldots,K\right\} ,~K=n/n_{K},$ $~$such that
$\mathcal{K}\left( i\right) =k,\quad n_{K}\left( k-1\right) <i\leq n_{K}k,\quad k=1,\ldots,K$, and
therefore $\sum_{i}I_{\left( \mathcal{K}\left( i\right) =k\right) }=n_{K}~\forall k.$ Then, the
dataset that excludes the partition number $k$ is
$\mathbf{X}_{\left( \left\{ k\right\} \right) }=\left\{ x_{i}:\mathcal{K}\left( i\right) \neq
  k\right\} .$ In the special case of leave-one-out CV,
$n_{K}=1,~K=n,~\mathcal{K}\left( i\right) =i,~\mathbf{X}_{\left( \left\{ i\right\} \right) }=\left\{
  x_{i^{\prime}} :\mathcal{K}\left( i^{\prime}\right) \neq i\right\} =\left\{ x_{i^{\prime
    }}:i^{\prime}\neq i\right\} =x_{1},\ldots,x_{i-1},x_{i+1},\ldots ,x_{n}=\mathbf{X}_{\left(
    i\right) }$. And hence, we call it CVN, as opposed to CV1 as appears in some literature, just
out of obsessiveness for preserving the notational consistency. The different CV versions used by
practitioners in literature and analyzed in \cite{Yousef2019LeisurelyLookVersionsVariants-arxiv} are
the leave-one-out, $K$-fold, repeated $K$-fold, and Monte-Carlo CV. These versions were formalized
respectively as follows:
\begin{subequations}\label{eq:CVEstimators}
  \begin{align}
    \widehat{AUC}^{\left( CVN\right) } & =\frac{1}{n_{1}n_{2}}\sum_{j=1} ^{n_{2}}\sum_{i=1}^{n_{1}}\, \Biggl[ \psi\left( h(x_i), h(y_j)\right)\Biggr],\notag\\
    h& = h_{\mathbf{X1}_{\left( i\right) }\mathbf{X2}_{\left( j\right) }}\label{EQCVN}\\
    \widehat{AUC}^{\left( CVK\right) } & =\frac{1}{n_{1}n_{2}}\sum_{j=1} ^{n_{2}}\sum_{i=1}^{n_{1}}\, \Biggl[\psi\left( h(x_i), h(y_j)\right) \Biggr],\notag\\
    h&= h_{\mathbf{X1}_{\left( \left\{ \mathcal{K}_{1}(i)\right\} \right) }\mathbf{X2}_{\left( \left\{ \mathcal{K}_{2}(j)\right\} \right) }}\label{EQCVK}\\
    \widehat{AUC}^{\left( CVKR\right) } & =\frac{1}{n_{1}n_{2}}\sum _{j=1}^{n_{2}}\sum_{i=1}^{n_{1}}\left[ \left.  \sum_{m}\psi\left( h(x_i), h(y_j)\right) \right/ M\right]\notag\\
    h&= h_{\mathbf{X1}_{\left( \left\{ \mathcal{K}_{1m}(i)\right\} \right) }\mathbf{X2}_{\left( \left\{ \mathcal{K}_{2m}(j)\right\} \right) }} .\label{EQCVKR}\\
    \widehat{AUC}^{\left(  CVKM\right)  } &  =\frac{1}{n_{1}n_{2}}\sum_{j=1}^{n_{2}}\sum_{i=1}^{n_{1}}\left[  \left.  \sum_{m}I_{j}^{m}I_{i}^{m}\psi\left( h(x_i), h(y_j)\right)  \right/  \sum_{m}I_{i}^{m}I_{j}^{m}\right],\notag\\
    h&=   h_{\mathbf{X1}_{\left(\{1\},m\right)}\mathbf{X2}_{\left(  \left\{  1\right\},m\right)  }}.\label{EQCVKM}
  \end{align}%
\end{subequations}

\subsection{Manuscript Roadmap}\label{sec:manuscript-roadmap}
This article is organized as follows. In Section \ref{SecBengio} we introduce some ad-hoc methods
for estimating the variance of the CV-based estimators. We do that for the estimators of the error
rate, not the AUC, to get the insight from the simplified analysis of the former. We analyze one of
these methods, following \cite{Bengio2004NoUnbiasedEstKCV}, and derive an expression for its
bias. Analyzing the rest of them is very similar, a task that we do not pursue since all of these
estimators are variants of one method that ignores the covariance structure as mentioned above. In
Section \ref{SecIFest} we derive our IF-based estimator that estimates the variance of the CV-based
estimators. This section assumes very good familiarity with the IF approach
\citep{Hampel1974TheInfluence, Hampel1986RobustStatistics, Huber2004RobustStatistics}, along with
familiarity with the work done for estimating the variance of bootstrap-based estimators~\citep
{Efron1997ImprovementsOnCross, Yousef2005EstimatingThe}. In Section \ref{SecSimulation} we
illustrate the results of the simulation study to compare our method to four versions of the ad-hoc
method. Section \ref{SecDisc} is a description of the work suggested to complement the present
article. Finally, all detailed proofs are deferred to the \APPENDIX~to develop lucid read of the
article.

%%% Local Variables:
%%% mode: latex
%%% TeX-master: "Yousef201XCV-II"
%%% End:

\section{Analysis of ad-hoc Methods of Estimating the variance of CVK Following \cite{Bengio2004NoUnbiasedEstKCV}}\label{SecBengio}
We will consider only the error rate in this section and analyze the variance of the error rate
CVK-based estimator to get insight from its simplified analysis if compared to the AUC. Consider the
CVK estimator of the error rate of a classifier%
\begin{equation}
  \widehat{Err}^{\left(  CVK\right)  }=\frac{1}{K}\sum_{k}\sum_{i\in
    \mathcal{K}^{-1}\left(  k\right)  }\frac{1}{n_{K}}Q\left(  x_{i}%
    ,\mathbf{X}_{\left(  \left\{  \mathcal{K}\left(  i\right)  \right\}  \right)
    }\right), \label{EQCVKpart}%
\end{equation}%
which can be rewritten as%
\begin{align}
  \widehat{Err}^{\left(  CVK\right)  }  &  =\frac{1}{K}\sum_{k}err_{k}\label{EQCVKbengio},
\end{align}
where $err_{k} =\sum_{i\in\mathcal{K}^{-1}\left( k\right) }\frac{1}{n_{K}}e_{i}$ and
$ e_{i} =Q\left( x_{i},\mathbf{X}_{\left( \left\{ \mathcal{K}\left(i\right) \right\}
    \right)}\right)$. \cite{Bengio2004NoUnbiasedEstKCV} derived a closed form expression of the
variance of (\ref{EQCVKbengio}) in terms of the covariance structure of $e_{i}$. They proved that%
\begin{subequations}
  \begin{align}
    \operatorname*{Var}\left[  \widehat{Err}^{\left(  CVK\right)  }\right]&=\frac{1}{n}\sigma^{2}+\frac{n_{K}-1}{n}\omega+\frac{n-n_{K}}{n}\gamma,\label{EQvarCV}\\
    \sigma^{2}  &  =\operatorname*{Var}e_{i},~i=1,\ldots,n;\label{EQcomp1}\\
    \omega &  =\operatorname*{Cov}\left(e_{i},e_{j}\right),~i\neq j,~\mathcal{K}\left(  i\right)  =\mathcal{K}\left(  j\right);\label{EQcomp2}\\
    \gamma &  =\operatorname*{Cov}\left(  e_{i},e_{j}\right)  ,~i\neq j,~\mathcal{K}\left(  i\right)  \neq\mathcal{K}\left(  j\right)  .\label{EQcomp3}
  \end{align}
\end{subequations}%
The components (\ref{EQcomp1})--(\ref{EQcomp3}) are the three possible components of a covariance
structure. This arises from the symmetry of all the observations: we either have a single variance
(\ref{EQcomp1}), a covariance between the errors of two observations belonging to the same testing
fold (\ref{EQcomp2}), or a covariance between the errors of two observations which do not belong to
the same testing fold (\ref{EQcomp3}). The reader should notice that the condition $i\neq j$ in
(\ref{EQcomp3}) is redundant since
$\mathcal{K}\left( i\right) \neq\mathcal{K}\left( j\right) \rightarrow i\neq j$. The estimate%
\begin{equation}
  \widehat{\operatorname*{Var}}\left[  \widehat{Err}^{\left(  CVK\right)}\right]  =\frac{1}{K}\left[  \frac{1}{K-1}\sum_{k=1}^{K}\left(  err_{k}-\frac{1}{K}\sum_{k^{\prime}=1}^{K}err_{k^{\prime}}\right)  ^{2}\right]\label{EQnaiveEst}%
\end{equation}
is one version of what many people use for estimating the variance of the CVK.  This is the same
form of the very well known UMVU estimator of the variance but it is not the UMVUE since the $K$
values of $err_{k}$ are not independent. This is obvious since every pair
$err_{i},~err_{j},~i\neq j$, share $K-1$ folds in the training set if they belong to the same testing
fold, and share $K-2$ if they belong to two different testing folds. It is worth mentioning that
some experts in the field, e.g., \cite{Hastie2001TheElements}, deliberately use (\ref{EQnaiveEst})
in selecting classifiers in the design stage based on their relative values of (\ref{EQnaiveEst}),
not to give a rigorous estimate.

\begin{lemma}\label{LemmaBiasOfVar}
  The Bias of the variance estimator~\eqref{EQnaiveEst} is given by:%
  \begin{align}
    \operatorname*{Bias}\left(  \widehat{\operatorname*{Var}}\left[  \widehat{Err}^{\left(CVK\right)  }\right]  \right) &  =-\gamma.\label{EqBiasOfVar}
  \end{align}
\end{lemma}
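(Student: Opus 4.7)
The estimator in \eqref{EQnaiveEst} has the form $\widehat{\operatorname{Var}}=\frac{1}{K}\cdot S^{2}$, where $S^{2}$ is the usual sample-variance-style expression in the fold-statistics $err_{k}$. I would start from the identity, valid for any exchangeable random variables $Y_{1},\ldots,Y_{K}$ with common variance $v$ and common pairwise covariance $c$,
\begin{equation*}
  \mathrm{E}\!\left[\frac{1}{K-1}\sum_{k}(Y_{k}-\bar Y)^{2}\right]=v-c,
\end{equation*}
which follows by expanding $\sum_{k}(Y_{k}-\bar Y)^{2}=\sum_{k}Y_{k}^{2}-K\bar Y^{2}$ and using $\operatorname{Var}(\bar Y)=(v+(K-1)c)/K$. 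The symmetry required here is inherited from the symmetry of observations that already underlies the three-parameter covariance structure \eqref{EQcomp1}--\eqref{EQcomp3}: all $err_{k}$ have the same variance and all pairs $(err_{k},err_{k'})$, $k\neq k'$, have the same covariance.

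\textbf{The two pieces.} The next step is to express these $v$ and $c$ in terms of $\sigma^{2}$, $\omega$, $\gamma$. Since $err_{k}=n_{K}^{-1}\sum_{i\in\mathcal{K}^{-1}(k)}e_{i}$, a straightforward expansion using $|\mathcal{K}^{-1}(k)|=n_{K}$ and the fact that any two indices within the same fold contribute $\omega$ gives
\begin{equation*}
  \operatorname{Var}(err_{k})=\frac{\sigma^{2}}{n_{K}}+\frac{n_{K}-1}{n_{K}}\omega.
\end{equation*}
For the covariance, every one of the $n_{K}^{2}$ cross pairs between fold $k$ and fold $k'\neq k$ has indices in different folds and thus contributes $\gamma$, so $\operatorname{Cov}(err_{k},err_{k'})=\gamma$.

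\textbf{Assembly.} Combining the pieces,
\begin{equation*}
  \mathrm{E}\!\left[\widehat{\operatorname{Var}}\bigl[\widehat{Err}^{(CVK)}\bigr]\right]=\frac{1}{K}\left(\frac{\sigma^{2}}{n_{K}}+\frac{n_{K}-1}{n_{K}}\omega-\gamma\right)=\frac{\sigma^{2}}{n}+\frac{n_{K}-1}{n}\omega-\frac{\gamma}{K},
\end{equation*}
using $n=K n_{K}$. Subtracting the true variance \eqref{EQvarCV}, and observing that $(n-n_{K})/n=(K-1)/K$, the $\sigma^{2}/n$ and $((n_{K}-1)/n)\omega$ terms cancel, leaving
\begin{equation*}
  \mathrm{Bias}=-\frac{\gamma}{K}-\frac{K-1}{K}\gamma=-\gamma,
\end{equation*}
as claimed.

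\textbf{Where the subtlety lies.} Nothing in the derivation is computationally hard; the only conceptual point worth flagging is the justification of exchangeability at the fold level, i.e.\ that $\operatorname{Var}(err_{k})$ does not depend on $k$ and $\operatorname{Cov}(err_{k},err_{k'})$ depends only on $k\neq k'$. This is precisely the content of the symmetry assumption that reduces the full covariance matrix of $(e_{1},\ldots,e_{n})$ to the three-parameter form of \eqref{EQcomp1}--\eqref{EQcomp3}, and once that is accepted the rest is bookkeeping.
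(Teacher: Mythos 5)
Your proof is correct, and it reaches the paper's own intermediate result $\operatorname{E}\widehat{\operatorname{Var}}[\widehat{Err}^{(CVK)}]=\tfrac{1}{n}\sigma^{2}+\tfrac{n_{K}-1}{n}\omega-\tfrac{\gamma}{K}$ before subtracting \eqref{EQvarCV}, so the final cancellation is identical. The route you take to that intermediate result, however, is organized differently from the paper's. The paper expands \eqref{EQnaiveEst} all the way down to the individual products $e_{i}e_{i'}$, sorting them into the three bins of \eqref{EQcomp1}--\eqref{EQcomp3} and taking expectations term by term in one large bookkeeping step. You instead work at the fold level: you invoke the standard identity $\operatorname{E}\bigl[\tfrac{1}{K-1}\sum_{k}(Y_{k}-\bar Y)^{2}\bigr]=v-c$ for exchangeable variables with common variance $v$ and common pairwise covariance $c$, and then separately compute $v=\operatorname{Var}(err_{k})=\sigma^{2}/n_{K}+\tfrac{n_{K}-1}{n_{K}}\omega$ and $c=\operatorname{Cov}(err_{k},err_{k'})=\gamma$. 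This modular two-stage argument is cleaner and makes the source of the bias more transparent (the sample variance of correlated folds systematically underestimates by exactly the common covariance $c=\gamma$ before the $1/K$ scaling, and the remaining $-\gamma(K-1)/K$ comes from the subtraction of the true variance); the paper's direct expansion buys nothing extra here but avoids having to state the exchangeability identity as a separate fact. Your closing remark correctly identifies the one point that needs justification in either version, namely that fold-level exchangeability is inherited from the observation-level symmetry assumed in \eqref{EQcomp1}--\eqref{EQcomp3}.
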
%
Recall that $\gamma$ is the covariance between the errors of two observations, which do not belong
to the same block (fold). Therefore, it is anticipated that $\gamma$ has a positive value since the
training sets of these two observations will be similar to some extent, which should impose the same
classifier behavior (on average) on both observations.
% The value $\gamma/\operatorname*{Var}\widehat{Err}^{\left( CVK\right) }$ is a fraction
% that depends on the contribution of $\gamma$.
More details on the behavior of $\gamma$ with respect to sample size and number of folds $K$ can
be found in the simulation study in \cite{Bengio2004NoUnbiasedEstKCV}, which suggests that
$\gamma$ is inversely proportional to $n$.
% From the approximate expression (\ref{EQapproxEst}), whose accuracy depends on how large $K$ is with respect to $1$ (similarly $n_{K}$),
It is quite reasonable to conjecture that the ad-hoc estimator exhibits a downward bias, and this
bias vanishes with increasing the data size. Our simulations in Section \ref{SecSimulation}
supports this conjecture.

The analysis above considered one version of the naive variance estimate of the error rate CV
estimator. When CVKR is used, we have $R$ repetitions of the CVK; each can provide a variance
estimate of the form (\ref{EQnaiveEst}). Averaging those estimates over the $R$ repetitions is
another version. Both versions are examined in our simulations in
Section~\ref{SecSimulation}. Following the same route above for the AUC is straight forward.

\bigskip

In case of the AUC, a two sample statistic \citep{Randles1979IntroductionTo}, one can make up
different forms of the ad-hoc estimate above:
\begin{subequations}\label{EQadhocVersions}%
  \begin{align}
    \widehat{\operatorname*{Var}}_{1}^{CVK}  &  =\frac{1}{\sqrt{K_{1}K_{2}}}\cdot \frac{1}{K_{1}K_{2}-1} \times\notag\\
                                           &\sum_{k=1}^{K_{1}K_{2}}\left(  AUC_{k_{1}k_{2}}-\frac{1}{K_{1}K_{2}}\sum_{k=1}^{K_{1}K_{2}}AUC_{k_{1}k_{2}}\right)^{2}  ,\\
    \widehat{\operatorname*{Var}}_{2}^{CVK}  &  =\frac{1}{K}\left[  \frac{1}{K-1}\sum_{k=1}^{K}\left(  AUC_{k}-\frac{1}{K}\sum_{k=1}^{K}AUC_{k}\right)^{2}\right]  ,\label{EQadhocCVK2}\\
    \widehat{\operatorname*{Var}}_{3}^{CVK}  &  =c_{1}\sum_{k_{1}=1}^{K_{1}}\left(  \frac{1}{K_{2}}\sum_{k_{2}=1}^{K_{2}}AUC_{k_{1}k_{2}}-\widehat{AUC}^{CVK}\right)  ^{2}+\notag\\
                                           &c_{2}\sum_{k_{2}=1}^{K_{2}}\left(  \frac{1}{K_{1}}\sum_{k_{1}=1}^{K_{1}}AUC_{k_{1}k_{2}}-\widehat{AUC}^{CVK}\right)  ^{2},\label{EQadhocCVK}%
  \end{align}
\end{subequations}
where, $AUC_{k_{1}k_{2}}$ is the AUC of the testing folds $k_{1},~k_{2}$ (where the classifier is
trained on the other folds), $c_{1}=\frac{1}%
{K_{1}\left( K_{1}-1\right) }$ for unbiasedness or $\frac{1}{K_{1}^{2}}$for MLE criterion, and
$c_{2}$ is defined analogously, $AUC_{k}$ is the AUC of the the testing folds $k_{1}=k_{2}=k$
(where the classifier is trained on the others folds), and $\widehat{AUC}^{CVK}$ is the usual
CVK-based AUC estimator~\eqref{EQCVK}.

% There is an interesting relationship between the estimator (\ref{EQadhocCVK}) for different
% values of $c_{1}$ and $c_{2}$ on a hand and the $U$-statistic estimator on the other hand, which
% is outside the scope of this report.
The estimator $\widehat{\operatorname*{Var}}_{2}^{CVK}$ is suitable for the case of imposing testing
on those folds with similar index, i.e., $K_{1}%
=K_{2}=K$. The estimator $\widehat{\operatorname*{Var}}_{1}^{CVK}$ has the same spirt of
$\widehat{\operatorname*{Var}}_{2}^{CVK}$, i.e., the variance of the mean of independent
observations is the variance of an observation divided by the number of observations ($K$ in our
case). This is clear in $\widehat{\operatorname*{Var}}_{2}^{CVK}$, where $AUC_{k},~k=1,\ldots,K$,
are assumed independent (which is a wrong assumption; this is why it is an ad hoc
estimator). However, in $\widehat{\operatorname*{Var}}_{1}^{CVK}$ the variance of $AUC_{k_{1}k_{2}}$
is loosely estimated from the pool of $K_{1}K_{2}$ observations (also, assumed independent); and the
term $1/\sqrt{K_{1}K_{2}}$ is the ad hoc analogue of $1/K$. Neverthless, both
$\widehat {\operatorname*{Var}}_{1}^{CVK}$ and $\widehat{\operatorname*{Var}}_{2}^{CVK}$ give very
similar results (Section~\ref{SecSimulation}).

\bigskip

In the case of CVKR \eqref{EQCVKR}, one can define another ad-hoc estimator for every estimator of
the three above. E.g., in terms of (\ref{EQadhocCVK}) a practitioner can define.
\begin{equation}
  \widehat{\operatorname*{Var}}_{3}^{CVKR}=\frac{1}{R}\sum_{r=1}^{R}%
  \widehat{\operatorname*{Var}}_{3}^{CVK}\left(  r\right)  . \label{EQadhocCVKR}%
\end{equation}
For the case of CVKM \eqref{EQCVKM} the ad-hoc estimate is%
\begin{equation}
  \widehat{\operatorname*{Var}}^{CVKM}=\frac{1}{\sqrt{K_{1}K_{2}}}\left[
    \frac{1}{M-1}\sum_{m=1}^{M}\left(  AUC_{11m}-\frac{1}{M}\sum_{m=1}%
      ^{M}AUC_{11m}\right)  ^{2}\right]  , \label{Eqadhoccvkm}%
\end{equation}
where $AUC_{11m}$ is the AUC from the only testing fold in each repetition, since we only test on
$k_{1}=k_{2}=1~\forall m$ as defined in \cite{Yousef2019LeisurelyLookVersionsVariants-arxiv}. This
estimate gives very similar results to the estimates above.

%%% Local Variables:
%%% mode: latex
%%% TeX-engine: default
%%% TeX-master: "Yousef201XCV-II"
%%% End:

\section{Estimating Cross Validation Variability}\label{SecIFest}
In this section we derive our novel estimator that estimates the cross validation variability. As
was introduced above, we assume the reader to be comfortable with the IF approach. Section
\ref{SecWhichVersion} is an overview on CV estimators and a discussion on which estimator among them
is suitable for variance estimation; we will conclude that CVKM and CVKR are the only
candidates. Section \ref{SecVarEst} is analysis and derivation. We demonstrate our analysis in this
section in terms of the AUC as a performance measure, and for lucidity of the article we defer all
proofs to the \APPENDIX.

\subsection{Which Version of Cross Validation?}\label{SecWhichVersion}
Recall the different versions of the CV~\eqref{eq:CVEstimators}. \cite{Efron1997ImprovementsOnCross}
alluded to the fact that cross validation is not a \textquotedblleft smooth
statistic\textquotedblright. They derived an IF estimator for estimating the variance of the Leave
One Out Bootstrap (LOOB) estimator instead, since the latter is a smooth statistic.
\cite{Yousef2005EstimatingThe} followed the same route for deriving an IF estimator for estimating
the variance of the Leave Pair Out Bootstrap (LPOB) estimator since it is a smooth statistic as
well. Several years, now, have passed and we come back to estimating the variance of the CV
estimators.

\bigskip

In the vernacular, the smoothness property means that a little variation in one observation should
result in a little variation in the statistic. Should the reader need more formalisms one of the
following should be visited: \cite{Hampel1974TheInfluence, Hampel1986RobustStatistics,
  Huber2004RobustStatistics}; more intuitive and experimental explanation is provided in
\cite{Yousef2019AucSmoothness-arxiv}. The CVN and CVK are not smooth statistics; any small change in one
observation does not lead to small change in the kernel $\psi$ inside the two summations. This is
true since the value of $\psi$ is either 0, 0.5, or 1. However, the other two versions, CVKM and
CVKR are happily smooth. The performance of each observation (or pair of observations) is averaged
over many training sets. This averaging is done by the summation $\Sigma_{m}$. This averaging
smoothes the quantity inside the square brackets and makes it suitable for differentiation. We
proved in \cite{Yousef2019LeisurelyLookVersionsVariants-arxiv} that both CVKM and CVKR are identical
asymptotically; i.e., CVKM converges almost surely to CVKR. Since both are smooth estimators, both
are candidates for the IF approach with the difference of how they are formed and the difference in
the number of trials $R$ and $M$. We explained in \cite{Yousef2019LeisurelyLookVersionsVariants-arxiv} why
$M$ should be larger than $R$ for the same required accuracy. We will give an IF-based estimation of
the variance of both CVKM and CVKR just for completion.  However, it should be obvious that both
should be identical asymptotically with large values of $R$ and $M$.

It is worth mentioning that CVKM and CVKR are smooth statistics because we wrote them the way it
appears in (\ref{EQCVKM}) and (\ref{EQCVKR}). CVKR can be written with swapping the summation over
$m$ with both the summations over $i$ and $j$. This other way is exactly equivalent (in value) to
the version above. However it is not equivalent from the sense of smoothness; it is unsmooth
(since, for every testing observation we do not average over many training sets) and therefore not
suitable for the IF estimation. The same comment is applicable to CVKM. This situation is very
similar to the two versions of the bootstrap estimator, the star vs. the leave-pair-out
\citep{Yousef2005EstimatingThe, Yousef2019LeisurelyLookVersionsVariants-arxiv}. We start with estimating
the variance of CVKM for that it is much easier than estimating the variance of CVKR.

\subsection{Estimating the Variance of $\widehat{AUC}^{( CVKM) }$}\label{SecVarEst}
For short, we can change the notation of (\ref{EQCVKM}) a little bit by dropping all the subscripts
indicating the training datasets, using $j_{1}$ and $j_{2}$ in place of $i$ and $j$ respectively,
and using the indecies $j_{1}$ and $j_{2}$ directly to indicate the testing observations
$x_{j_{1}}$and $x_{j_{2}}$respectively. Then we can write%
\begin{align}
  &\widehat{AUC}^{\left(  CVKM\right)  }=\frac{1}{n_{1}n_{2}}\times\notag\\
  &\sum_{j_{2}=1}^{n_{2}}\sum_{j_{1}=1}^{n_{1}}\left[  \left.  \sum_{m}I_{j_{2}}^{m}I_{j_{1}}^{m}\psi\left(  h_{m}\left(  j_{1}\right)  ,h_{m}\left(j_{2}\right)  \right)  \right/  \sum_{m}I_{j_{2}}^{m}I_{j_{1}}^{m}\right].
\end{align}
We can follow a very similar route to that of \cite{Yousef2005EstimatingThe}, when the LPOB
estimator was treated, but with taking care of the difference in resampling mechanism between the
CVKM and the bootstrap. If the probability of the $i\textsuperscript{th}$ observation is perturbed by an amount $\varepsilon$ then
\begin{align}
  \widehat{AUC}_{\varepsilon,i}^{\left(  CVKM\right)  }  &  =\sum\limits_{j_{2}=1}^{n_{2}}{\sum\limits_{j_{1}=1}^{n_{1}}{\hat{f}_{1_{\varepsilon,i}}(j_{1})\hat{f}_{2_{\varepsilon,i}}(j_{2})}}\frac{\sum_{m}I_{j_{2}}^{m}I_{j_{1}}^{m}\psi\left(  h_{m}\left(  j_{1}\right)  ,h_{m}\left(j_{2}\right)  \right)  g_{\varepsilon,i}}{\sum_{m}I_{j_{2}}^{m}I_{j_{1}}^{m}g_{\varepsilon,i}}\notag\\
                                                       &  =\sum\limits_{j_{2}=1}^{n_{2}}{\sum\limits_{j_{1}=1}^{n_{1}}{A}}\left(\varepsilon\right)  \frac{B\left(  \varepsilon\right)  }{C\left(\varepsilon\right)  },\notag
\end{align}
where
${{A}}\left( \varepsilon\right)
={{\hat{f}_{1_{\varepsilon,i}}(j_{1})\hat{f}_{2_{\varepsilon,i}}(j_{2})}}$,
${{B}}\left( \varepsilon\right)=\sum_{m}I_{j_{2}}^{m}I_{j_{1}}^{m}\psi\left( h_{m}\left(
    j_{1}\right),h_{m}\left( j_{2}\right) \right)g_{\varepsilon,i}$, and
$ {{C}}\left( \varepsilon\right) =\sum_{m}I_{j_{2}}^{m}I_{j_{1}}^{m}g_{\varepsilon,i}$; and the
empirical probability after perturbing an observation $i$ from each class is given by%
\begin{align}
  \widehat{f}_{k_{\varepsilon,i}}(j_{k})  &  =\left\{%
                                         \begin{array}[c]{lr}%
                                           \frac{1-\varepsilon}{n_{k}}+\varepsilon, & j_{k}=i\\
                                           \frac{1-\varepsilon}{n_{k}}, & j_{k}\neq i
                                         \end{array}\right., i,j_{k}=1,\ldots,n_{1},k=1,2.\notag
\end{align}
and their derivatives are given by
\begin{align}
  \frac{\partial f_{k_{\varepsilon,i}}(j_{k})}{\partial\varepsilon}  & =\delta_{ij_{k}}-1/n_{k},\ k=1,2.
\end{align}

Regarding $g_{\varepsilon,i}$, it is a shorthand writing for
$g_{\varepsilon,i}\left( j_{1},j_{2},m\right) $, which is the probability of the training set that
excludes the two folds of the observations $j_{1}$ and $j_{2}$ in the repetition $m$ after
perturbing the observation $i$ with a probability measure $\varepsilon$. Suppose that we perturb the
observation $i\in\mathbf{X}_{1}$; then since the partitioning of the two sets is done independently,
we can write
$g_{\varepsilon,i}\left( j_{1},j_{2},m\right)
=g_{1\varepsilon,i}\left(j_{1},m\right)g_{2\varepsilon,i}\left( j_{2},m\right)$; and when there is
no perturbation, i.e., $\varepsilon=0$, $g_{\varepsilon,i}\left( j_{1},j_{2},m\right) $ becomes
$g_{0} =g_{10}\cdot g_{20} =\frac{1}{\binom{n_{1}}{n_{1K}}}\cdot\frac{1}{\binom{n_{2}}{n_{2K}}}$,
where $n_{1K}$ and $n_{2K}$ are the number of observations in each fold. Now, we have to find
$g_{1\varepsilon,i}\left( j_{1},m\right)$ and its derivative; both of which are provided in the next
lemma. The proof is deferred to the \APPENDIX, where, in addition, we provide another
method of deriving $g_{_{1}\varepsilon,i}$ and its derivative for more understanding of the
probability perturbation problem. This other method relies on a different perturbation approach but
gives very similar results to the following lemma; however, it lacks a closed form expression.
\begin{lemma}\label{lem:prob}The probability, and its derivative, of the training sets of the CV under the
  probability perturbation is given by:%
  \begin{align}
    g_{_{1}\varepsilon,i}&  =\frac{1}{(  n_{1}-n_{1K})  \binom{n_{1}}{n_{1}-n_{1K}}}\sum_{r=1}^{n_{1}-n_{1K}}(  1-\varepsilon)  ^{(  r-1)}(  \varepsilon n_{1}-r\varepsilon+1).\notag\\
    \frac{g_{1\varepsilon,i}^{\cdot}\left(  0\right)  }{g_{10}}  & =n_{1K}-I_{i}^{m}n_{1}, \label{EQgdot}%
  \end{align}
\end{lemma}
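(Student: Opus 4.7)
My plan is to compute $g_{1\varepsilon,i}(j_1,m)$ directly as the probability, under the perturbed empirical mass of class~1, of drawing the specific training subset that excludes the fold containing $j_1$ in repetition $m$, and then differentiate the resulting closed form at $\varepsilon = 0$ via the product rule. I would begin by writing the perturbed mass as $\hat{f}_{1\varepsilon,i}(\ell) = (1-\varepsilon)/n_1 + \varepsilon\,\delta_{i\ell}$, so that observation $i$ carries the extra weight $\varepsilon$ while every other observation retains mass $(1-\varepsilon)/n_1$. The sanity check at $\varepsilon = 0$ is immediate: each summand equals $1$, so $\sum_{r=1}^{n_1-n_{1K}} 1 = n_1-n_{1K}$ cancels the prefactor and gives $g_{10} = 1/\binom{n_1}{n_{1K}}$, as expected for the uniform partitioning.

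The main computation models the construction of the training set as a sequential draw without replacement of $n_1 - n_{1K}$ observations from the perturbed distribution, renormalizing over the undrawn observations at each step. I would stratify the probability by the position $r \in \{1, \ldots, n_1 - n_{1K}\}$ at which the perturbed observation $i$ is drawn (taking first the case $I_i^m = 0$, i.e.\ $i$ belongs to the training set). Each stratum factors into three pieces: (a) the probability that the first $r-1$ draws are non-$i$ observations, which after telescoping of the renormalized denominators contributes $(1-\varepsilon)^{r-1}$; (b) the draw of $i$ at step $r$, which contributes the residual-mass factor $(\varepsilon n_1 - r\varepsilon + 1)$ from the current normalization; and (c) the remaining symmetric draws, which combined with the count over positions merge into the uniform prefactor $1/[(n_1-n_{1K})\binom{n_1}{n_1-n_{1K}}]$. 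Summing over $r$ produces the stated closed form. The complementary case $I_i^m = 1$, in which $i$ lies in the excluded fold and is never drawn, is handled analogously with a single residual-mass factor.

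For the derivative at $\varepsilon = 0$, I would differentiate each summand termwise, obtaining $\partial_\varepsilon[(1-\varepsilon)^{r-1}(\varepsilon n_1 - r\varepsilon + 1)]|_{\varepsilon=0} = n_1 - 2r + 1$, and then apply the elementary identity $\sum_{r=1}^{n_1-n_{1K}}(n_1 - 2r + 1) = (n_1-n_{1K})\,n_{1K}$ to conclude $\dot g_{1\varepsilon,i}(0)/g_{10} = n_{1K}$ when $I_i^m = 0$. The case $I_i^m = 1$ yields $\dot g_{1\varepsilon,i}(0)/g_{10} = n_{1K} - n_1$, and the two cases are absorbed into the single compact expression $n_{1K} - I_i^m n_1$ claimed in \eqref{EQgdot}.

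I expect the main obstacle to be the careful bookkeeping of the cascade of renormalizations in the sequential draw: at each step the denominator shifts by $(1-\varepsilon)/n_1$ per previously drawn non-$i$ observation and by an additional $\varepsilon$ once $i$ itself has been drawn, so these shifts must telescope cleanly to recover the clean exponential and linear factors appearing in the statement. A secondary bookkeeping task is verifying the independence of the class-1 and class-2 partitionings, so that $g_{\varepsilon,i}(j_1,j_2,m) = g_{1\varepsilon,i}(j_1,m)\,g_{2\varepsilon,i}(j_2,m)$ and the symmetric argument carries over to class~2 without further computation; the appendix's alternative perturbation derivation mentioned by the author should serve as a useful cross-check that the first-order expansion is robust to the precise sampling convention.
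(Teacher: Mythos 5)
Your proposal is correct and follows essentially the same route as the paper's first proof: sequential draws stratified by the position $r$ at which the perturbed observation enters the training set, the same three-factor decomposition into $(1-\varepsilon)^{r-1}$, the residual factor $(\varepsilon n_{1}-r\varepsilon+1)$, and a uniform prefactor, followed by termwise differentiation with $\sum_{r=1}^{n_{1}-n_{1K}}(n_{1}-2r+1)=(n_{1}-n_{1K})n_{1K}$ and the compact combination $n_{1K}-I_{i}^{m}n_{1}$. The one caution is that your phrase ``renormalizing over the undrawn observations at each step,'' taken literally as exact conditional renormalization, reproduces the paper's \emph{second} derivation (which has no closed form because the pre-$i$ denominators become $n_{1}-j+j\varepsilon$); obtaining the stated closed form requires the first proof's convention of keeping those denominators at the unperturbed counts $n_{1},n_{1}-1,\ldots$, which is in fact what your asserted factors implicitly assume.
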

It is interesting, as well, to rewrite (\ref{EQgdot}) as
\begin{align}
  \frac{g_{1\varepsilon,i}^{\cdot}\left(  0\right)  }{g_{10}} &=n_{1}\left(N_{i}^{m}-1+\frac{1}{K}\right)  ,
\end{align}
where $N_{i}^{m}~\left( =1-I_{i}^{m}\right) $ is the number of times the observation $i$ appears in the
training set of the repetition $m$. This is similar to the derivative of the bootstrap training set
probabilities \citep[see][]{Yousef2005EstimatingThe}, with the exception that $N_{i}^{b}$ can take
any value between $0$ and $n_{1}$ due to the resampling mechanism of the bootstrap. However, for the
CV based estimators $N_{i}^{m}$ is either 1 or 0. Now, the previous mathematical introduction is sufficient to state the main lemma of the article
\begin{lemma}\label{LemmaSD}The estimation of the standard error of $  \widehat{AUC}^{\left(  CVKM\right)  }$ is given by:
  \begin{align}
    &\widehat{SD}\left[  \widehat{AUC}^{\left(  CVKM\right)  }\right]  =\sqrt{\frac{1}{n_{1}^{2}}\sum\limits_{i=1}^{n_{1}}{\hat{U}_{1_{i}}^{2}}+\frac{1}{n_{2}^{2}}\sum\limits_{j=1}^{n_{2}}{\hat{U}_{2_{j}}^{2}}},\\
    &\hat{U}_{1_{i}} = I + II + III,\\
    I  & =\widehat{AUC}_{1i}-\widehat{AUC}^{\left(  CVKM\right)  },\label{EQAUCi}\\
    \widehat{AUC}_{1i}  &  =\frac{1}{n_{2}}\sum\limits_{j_{2}=1}^{n_{2}}\left[\left.  \sum_{m}I_{j_{2}}^{m}I_{i}^{m}\psi\left(  h_{m}\left(  i\right),h_{m}\left(  j_{2}\right)  \right)  \right/  \sum_{m}I_{j_{2}}^{m}I_{i}^{m}\right] ,\notag\\
    II  &=\frac{1}{n_{1}n_{2}}\sum\limits_{j_2=1}^{n_2} {\sum\limits_{j_1=1}^{n_1}}\left[  \frac{\sum_mI_{j_2}^m I_{j_1}^m \psi\left( h_m(j_1)  ,h_{m}(j_2)  \right)  \frac{g_{\varepsilon,i}^{\cdot}(0)}{g(0)} } {\sum_mI_{j_2}^mI_{j_1}^m}\right]\notag\\
    III&  =\frac{1}{n_{1}n_{2}}\sum\limits_{j_{2}=1}^{n_{2}}{\sum\limits_{j_{1}=1}^{n_{1}}}\frac{\left(  \sum_{m}I_{j_{2}}^{m}I_{j_{1}}^{m}\psi\left(h_{m}\left(  j_{1}\right)  ,h_{m}\left(  j_{2}\right)  \right)  \right)\left(  \sum_{m}I_{j_{2}}^{m}I_{j_{1}}^{m}\frac{g_{\varepsilon,i}^{\cdot}\left(  0\right)  }{g\left(  0\right)  }\right)  }{\left(  \sum_{m}I_{j_{2}}^{m}I_{j_{1}}^{m}\right)  ^{2}},\notag
  \end{align}
  and similarly, $\hat{U}_{2_{j}},~j=1,\ldots n_{2}$, is given.
\end{lemma}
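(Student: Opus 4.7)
\textbf{Proof plan for Lemma \ref{LemmaSD}.} The plan is to invoke the nonparametric delta-method formula for the standard error of a two-sample statistic, as established for the LPOB estimator in \cite{Yousef2005EstimatingThe}: if $T$ is a smooth functional of the pair $(\hat F_1,\hat F_2)$ and $\hat U_{1_i}$, $\hat U_{2_j}$ are its empirical influence functions at the observations of class $1$ and class $2$, then
\begin{equation*}
\widehat{SD}[T]=\sqrt{\tfrac{1}{n_1^2}\sum_i \hat U_{1_i}^{2}+\tfrac{1}{n_2^2}\sum_j \hat U_{2_j}^{2}}.
\end{equation*}
Smoothness of $\widehat{AUC}^{(CVKM)}$, discussed in Section~\ref{SecWhichVersion}, licenses this formula. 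So everything reduces to computing $\hat U_{1_i}=\partial_\varepsilon\widehat{AUC}^{(CVKM)}_{\varepsilon,i}\big|_{\varepsilon=0}$, where the probability mass of the $i$-th class-$1$ observation is perturbed by $\varepsilon$; the derivation for $\hat U_{2_j}$ is symmetric.

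Next I would start from the perturbed form $\widehat{AUC}^{(CVKM)}_{\varepsilon,i}=\sum_{j_1,j_2}A(\varepsilon)\,B(\varepsilon)/C(\varepsilon)$ already exhibited in the text and apply the product rule: at $\varepsilon=0$,
\begin{equation*}
\hat U_{1_i}=\sum_{j_1,j_2}\Bigl[A'(0)\,\tfrac{B(0)}{C(0)}+A(0)\,\bigl(B/C\bigr)'(0)\Bigr].
\end{equation*}
For the first piece I would use $\partial_\varepsilon\hat f_{1_{\varepsilon,i}}(j_1)\big|_0=\delta_{ij_1}-1/n_1$ together with the fact that perturbing a class-$1$ point leaves $\hat f_2$ unchanged, so $A'(0)=(\delta_{ij_1}-1/n_1)/n_2$. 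Splitting the $\delta_{ij_1}$ term from the $-1/n_1$ term and substituting $B(0)/C(0)$ gives, respectively, $\widehat{AUC}_{1_i}$ (by the very definition \eqref{EQAUCi}, since collapsing $j_1\!\to\!i$ leaves an average over $j_2$) and $-\widehat{AUC}^{(CVKM)}$ (since the $-1/n_1$ factor combined with the remaining sum over $j_1,j_2$ reproduces the whole CVKM estimator). This is exactly term $I$.

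For the second piece I would observe that $A(\varepsilon),B(\varepsilon),C(\varepsilon)$ depend on $\varepsilon$ only through $\hat f_1,\hat f_2$ and through $g_{\varepsilon,i}$; in particular the classifiers $h_m$, the fold indicators $I^m_{\cdot}$ and the kernel $\psi$ are data-dependent only and inert under the weight perturbation. Because partitioning is independent across classes, $g_{\varepsilon,i}=g_{1\varepsilon,i}\cdot g_{20}$ for $i$ in class $1$, so $g'_{\varepsilon,i}(0)/g(0)=g'_{1\varepsilon,i}(0)/g_{10}$, which is exactly the quantity supplied by Lemma~\ref{lem:prob}. Applying the quotient rule $(B/C)'=B'/C-BC'/C^{2}$ and pulling the $\varepsilon$-derivative inside the sum over $m$ (valid because $g'_{1\varepsilon,i}(0)/g_{10}=n_{1K}-I_i^m n_1$ depends on $m$ through $I_i^m$), one obtains the two displayed expressions: $B'(0)/C(0)$ gives term $II$, and $-B(0)C'(0)/C(0)^{2}$ gives term $III$. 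Summing $I+II+III$ produces the stated $\hat U_{1_i}$, and substituting into the delta-method formula completes the proof.

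The main obstacle will be the bookkeeping in the quotient-rule step: because $g'_{1\varepsilon,i}(0)/g_{10}$ varies with $m$, one cannot factor it out of the $\sum_m$, and the ratio $B/C$ does not simplify to a single $\psi$-average times a scalar. The clean split into terms $II$ and $III$ only emerges after one carefully divides numerator and denominator by $g_0$ inside each $\sum_m$ before differentiating, so that what remains is precisely the dimensionless ratio $g_{\varepsilon,i}^{\cdot}(0)/g(0)$ appearing in the statement. Once that normalization is in place, everything reduces to mechanical substitution of Lemma~\ref{lem:prob}.
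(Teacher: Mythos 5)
Your proposal follows essentially the same route as the paper's own proof: apply the two-sample influence-function (delta-method) formula, differentiate the perturbed estimator $\sum_{j_1,j_2}A(\varepsilon)B(\varepsilon)/C(\varepsilon)$ at $\varepsilon=0$ via the product and quotient rules, identify term $I$ from $A^{\cdot}(0)$ and terms $II$, $III$ from the normalized ratio $g_{\varepsilon,i}^{\cdot}(0)/g(0)$ supplied by Lemma~\ref{lem:prob}. The decomposition, the key lemma invoked, and the handling of the $m$-dependence of $g_{\varepsilon,i}^{\cdot}(0)/g(0)$ inside $\sum_m$ all match the paper's argument.
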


In what follows, we treat a special case of the CVKM where $K=n$; this is the case where CVKM
converges almost surely to CVN. Interestingly, for this case the ratio (\ref{EQgdot}) can be
expressed as $\frac{g_{\varepsilon,i}^{\cdot}\left( 0\right) }{g_{0}}=n_{1K}
-\delta_{ij_{1}}n_{1}$. Then,
\begin{align}
  II  &  =n_{1K}\widehat{AUC}^{\left(  CVKM\right)  }-\widehat{AUC}_{1i}\\
  III  &  =n_{1K}\widehat{AUC}^{\left(  CVKM\right)  }-\widehat{AUC}_{1i},
\end{align}
which means
\begin{equation}
  \hat{U}_{1_{i}}=\widehat{AUC}_{1i}-\widehat{AUC}^{\left(  CVKM\right)  },
\end{equation}
and
\begin{align}
  &\widehat{SD}\left[  \widehat{AUC}^{\left(  CVKM\right)  }\right]  =\notag\\
  &\sqrt{\frac{1}{n_{1}^{2}}\sum\limits_{i=1}^{n_{1}}\left(  \widehat{AUC}_{1i}-\widehat{AUC}^{\left(  CVKM\right)  }\right)  ^{2}+\frac{1}{n_{2}^{2}}\sum\limits_{j=1}^{n_{2}}\left(  \widehat{AUC}_{1j}-\widehat{AUC}^{\left(CVKM\right)  }\right)  ^{2}}. \label{EQIFCVN}
\end{align}
We can write $\widehat{AUC}^{\left( CVKM\right) }$ in terms of $\widehat{AUC}_{1i}$ as
\begin{align}
  \widehat{AUC}^{\left(  CVKM\right)  } &  =\frac{1}{n_{1}}{\sum\limits_{i=1}^{n_{1}}}\widehat{AUC}_{1i}.\label{EqAUCCVKMi}
\end{align}
In the same time, for CVN there is only one repetition $m$ for which $i_{i}^{m}$ and $I_{j_{2}}^{m}$ in
(\ref{EQAUCi}) equal 1. Therefore, for CVN, (\ref{EQAUCi}) reduces (with large $M$) to
\begin{align}
  \widehat{AUC}_{1i}&=\frac{1}{n_{2}}\sum\limits_{j_{2}=1}^{n_{2}}\psi\left(h\left(  i\right)  ,h\left(  j_{2}\right)  \right).
\end{align}

To get the insight from the above, imagine that we had treated the error rate, rather than the
AUC. We would get $\widehat{Err}_{i}=Q\left( i\right) $, and
$\widehat{SD}=\sqrt{\frac{1}{n_{1}^{2}}\sum\limits_{i=1}^{n_{1}}\left( Q\left(
      i\right)-\overline{Q}\right)^{2}}$. IF method for the case of CVN, therefore, simply reduces
to the naive sample variance estimate of the sample mean, the same ad-hoc estimators, which assume
that the test values are independent. This will introduce bias. The reason is that the CVN is not a
smooth statistic; and the smoothing over the summation $\Sigma_{m}$ is a fake one since for every
left-one-out observation $j_{1}$ there is only one training set consisting of the remaining
$n_{1}-1$ observations. Repeating the estimation $M$ times accounts for reproducing the same result
for every observation $j_{1}$. The opposite is true for the case $K=n/2$, for which the maximum
number of distinct $M$ trials is possible. It is known that $\binom{n}{n/K}$ is maximized when
$K=2$. Therefore, the IF for CVK when $K=n/2$ produces more accurate results than other values of
$K.$ However, all almost produce small bias, except for the extreme case when $K=n$, i.e., CVN.  For
$K=n/2$ we have the best estimate of variance; however, the estimate of the AUC itself is biased
since we train on half of the observations. One can defend that by saying this is almost the same
size on which the bootstrap is supported. Figure~\ref{FIGCVBSratio} (\APPENDIX) is a plot of
$r=\binom{n}{n/2}/n^{n}$, which is the ratio between the number of permutations of both KCV (where
$K=2$) and the Bootstrap.

\subsection{Estimating the Variance of $\widehat{AUC}^{\left( CVKR\right) }$}\label{sec:estim-vari-wideh}
For short, we can rewrite $\widehat{AUC}^{\left(  CVKR\right)  }$ (\ref{EQCVKR}) and $\widehat{AUC}_{\varepsilon,i}^{\left(  CVKR\right)  }$ as%
\begin{align}
  \widehat{AUC}^{\left(  CVKR\right)  }  &  =\frac{1}{n_{1}n_{2}}\sum_{j_{2}=1}^{n_{2}}\sum_{j_{1}=1}^{n_{1}}\left[  \left.  \sum_{m}\psi\left(h_{m}\left(  j_{1}\right)  ,h_{m}\left(  j_{2}\right)  \right)  \right/M\right]  ,\notag\\
  \widehat{AUC}_{\varepsilon,i}^{\left(  CVKR\right)  }  &  =\sum\limits_{j_{2}=1}^{n_{2}}{\sum\limits_{j_{1}=1}^{n_{1}}{\hat{f}_{1_{\varepsilon,i}}(j_{1})\hat{f}_{2_{\varepsilon,i}}(j_{2})}}\left[  \sum_{m}\psi\left(h_{m}\left(  j_{1}\right)  ,h_{m}\left(  j_{2}\right)  \right)  G_{\varepsilon,i}\right]  ,\notag
\end{align}
Then, by taking the derivatives and following the same route above we get%
\begin{subequations}\label{eq:9}
  \begin{align}
    \hat{U}_{1_{i}}&  =\widehat{AUC}_{1i}-\widehat{AUC}^{\left(  CVKR\right)  }+ \notag\\
                 &\frac{1}{n_{1}n_{2}}\sum\limits_{j_{2}=1}^{n_{2}}{\sum\limits_{j_{1}=1}^{n_{1}}}\sum_{m}\psi\left(  h_{m}\left(  j_{1}\right)  ,h_{m}\left(  j_{2}\right)\right)  \left.  G_{\varepsilon,i}^{\cdot}\right\vert _{\varepsilon=0},\\
    \widehat{AUC}_{1i}  &  =\frac{1}{n_{2}}\sum\limits_{j_{2}=1}^{n_{2}}\left[\left.  \sum_{m}\psi\left(  h_{m}\left(  i\right)  ,h_{m}\left(  j_{2}\right)\right)  \right/  M\right]  ,\\
    G_{0}  &  =\frac{1}{\prod\limits_{k=0}^{K_{2}-1}\binom{n_{2}-kn_{2K}}{n_{2K}}}\cdot\frac{1}{\prod\limits_{k=0}^{K_{1}-1}\binom{n_{1}-kn_{1K}}{n_{1K}}}.
  \end{align}
\end{subequations}
The probability $g_{\varepsilon,i}$ depends on the partition $k$ in which $j_{1}$appears in. Deriving
a closed form expression for the probability $G_{\varepsilon,i}$ is deferred to future work (Section
\ref{SecDisc}).

%%% Local Variables:
%%% mode: latex
%%% TeX-engine: default
%%% TeX-master: "Yousef201XCV-II"
%%% End:

\begin{figure*}[!tbh]\centering
  {\scriptsize \hfil; Bias of IF\hspace{1.6cm}Bias of CVKR\hspace{1.6cm}SD of IF\hspace{1.6cm}SD of CVKR\hspace{1.6cm}RMS of IF\hspace{1.6cm}RMS of CVKR\hfil}
  \includegraphics[width=0.16\textwidth]{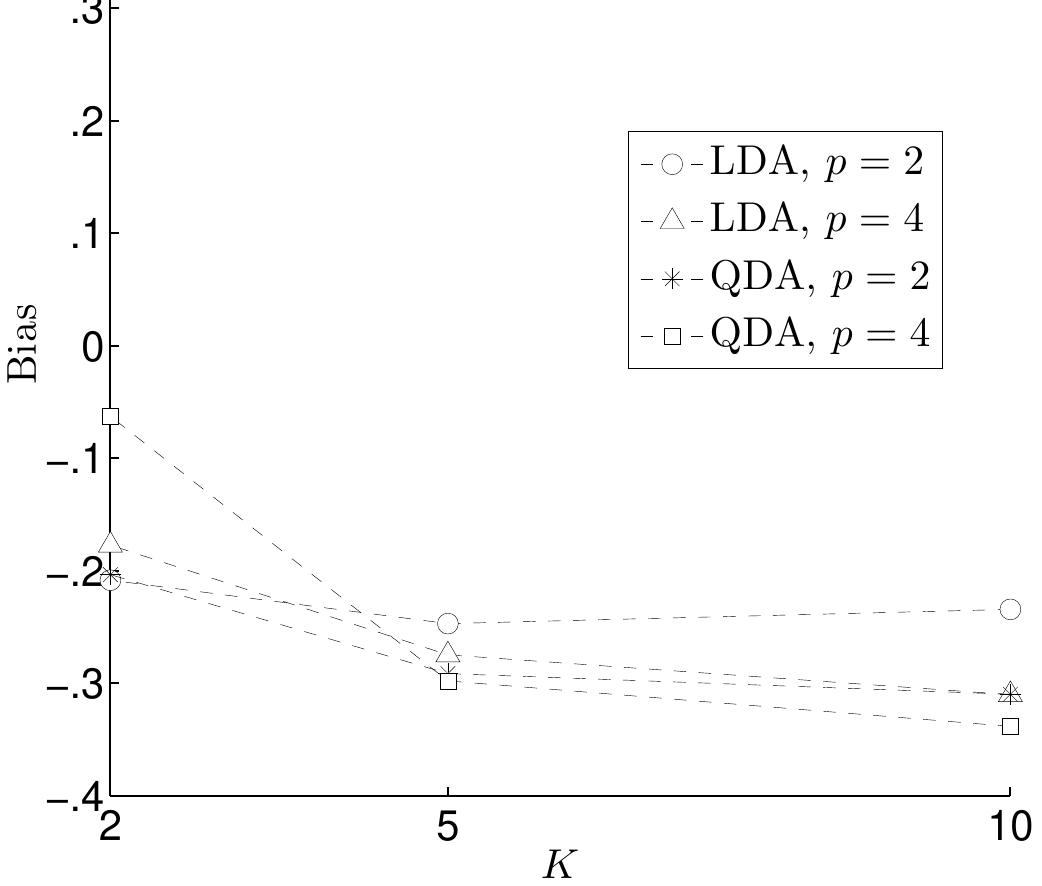}\includegraphics[width=0.16\textwidth]{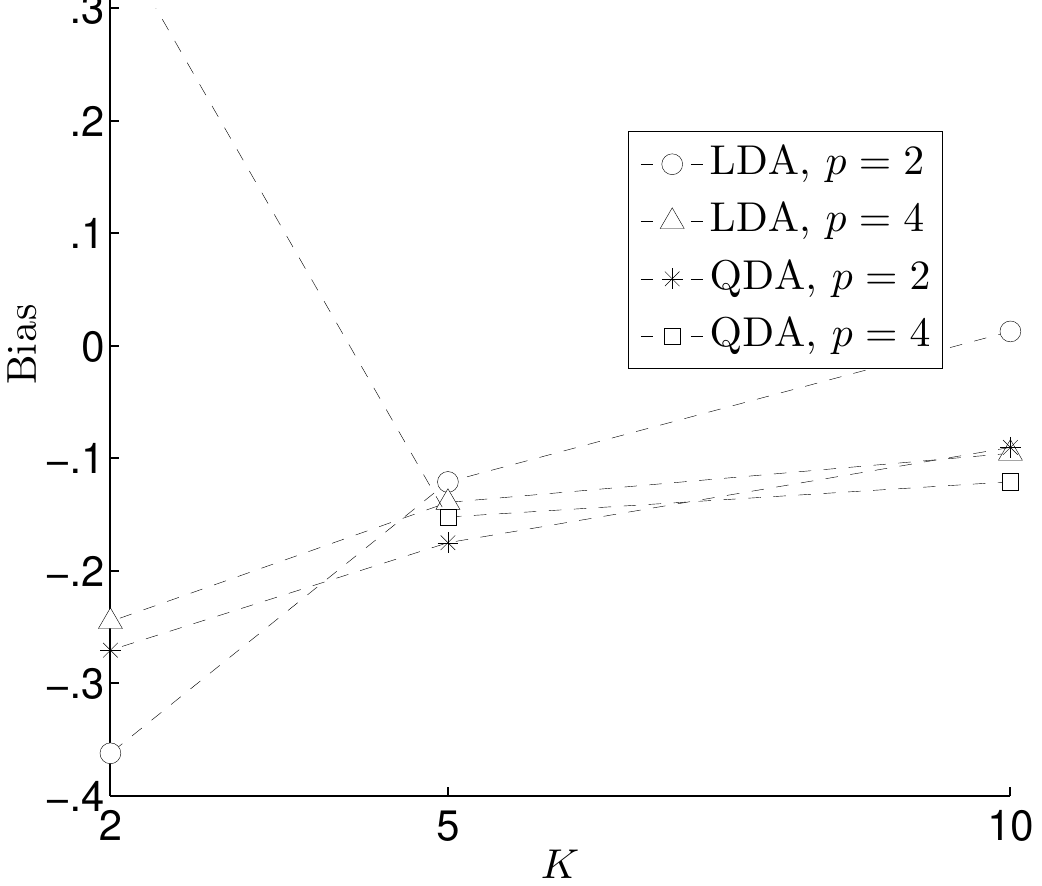}\includegraphics[width=0.16\textwidth]{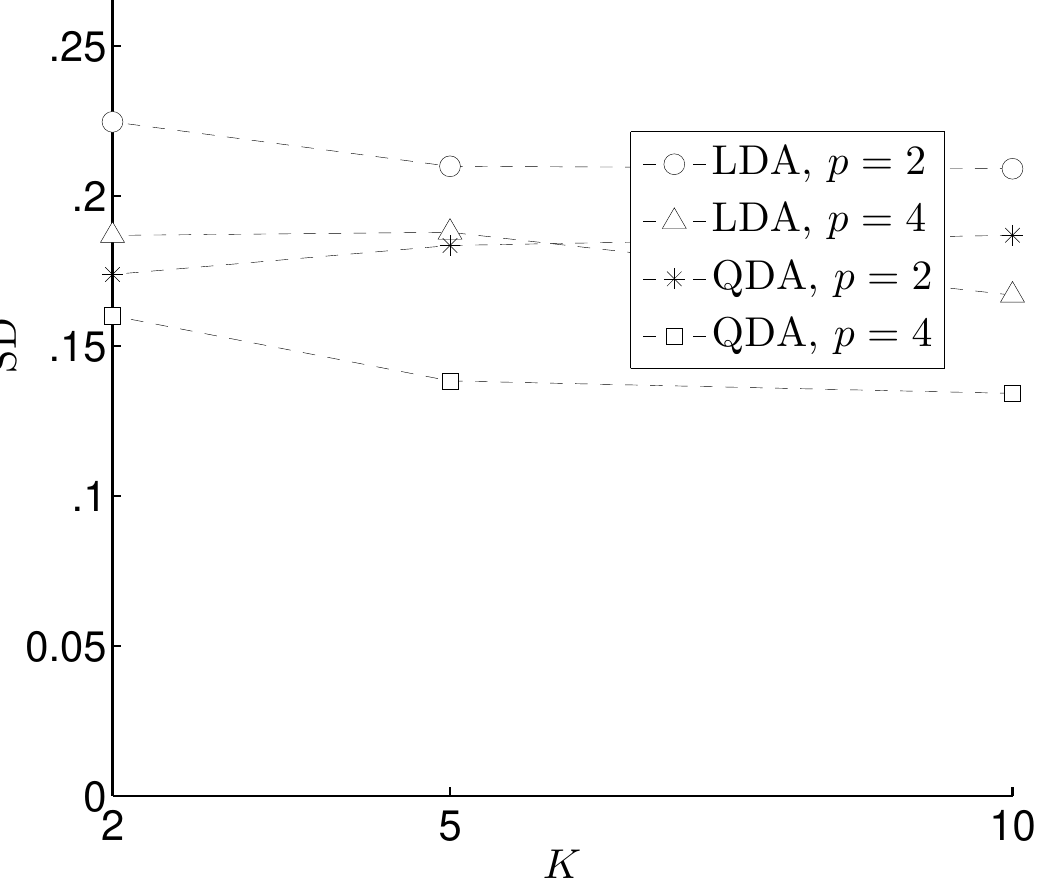}\includegraphics[width=0.16\textwidth]{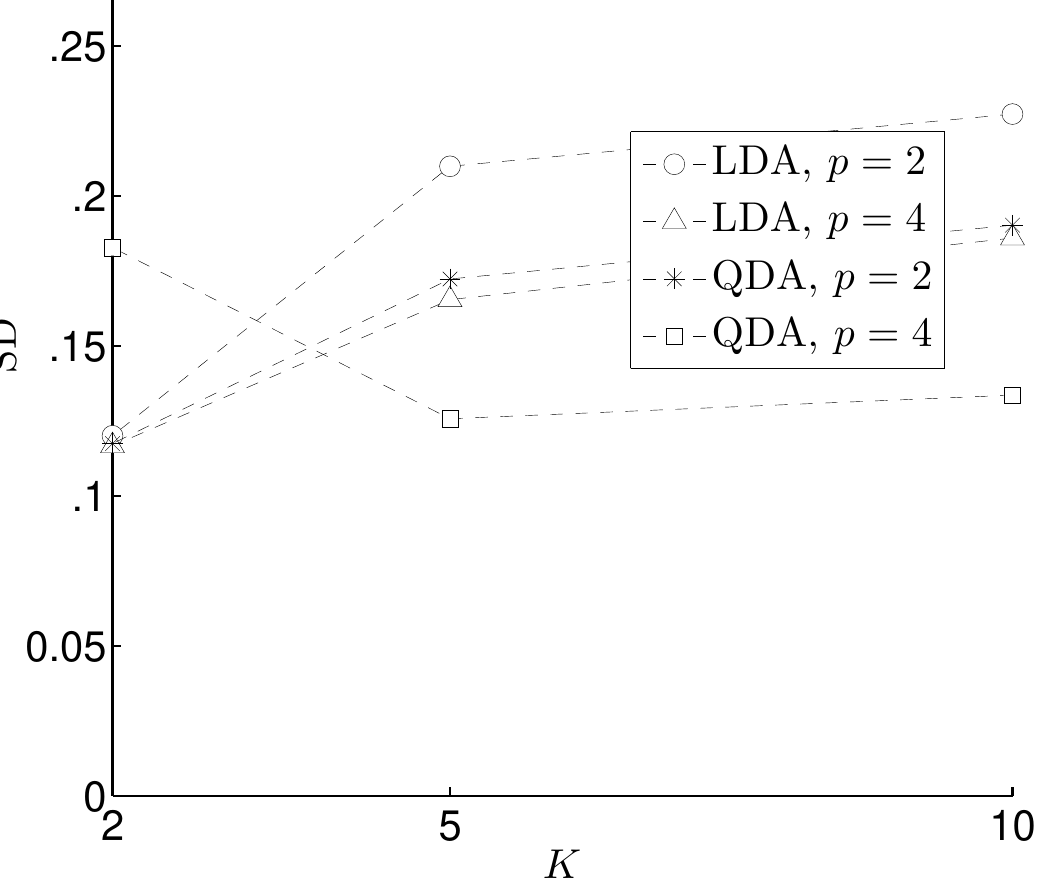}\includegraphics[width=0.16\textwidth]{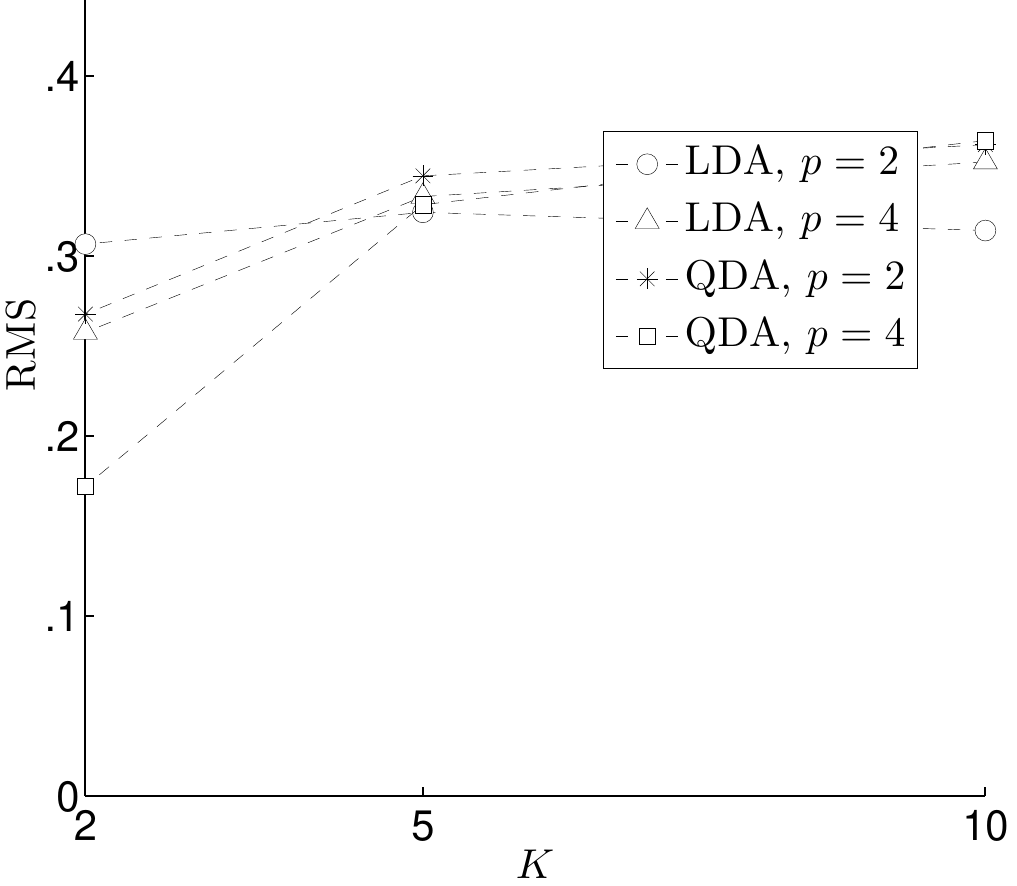}\includegraphics[width=0.16\textwidth]{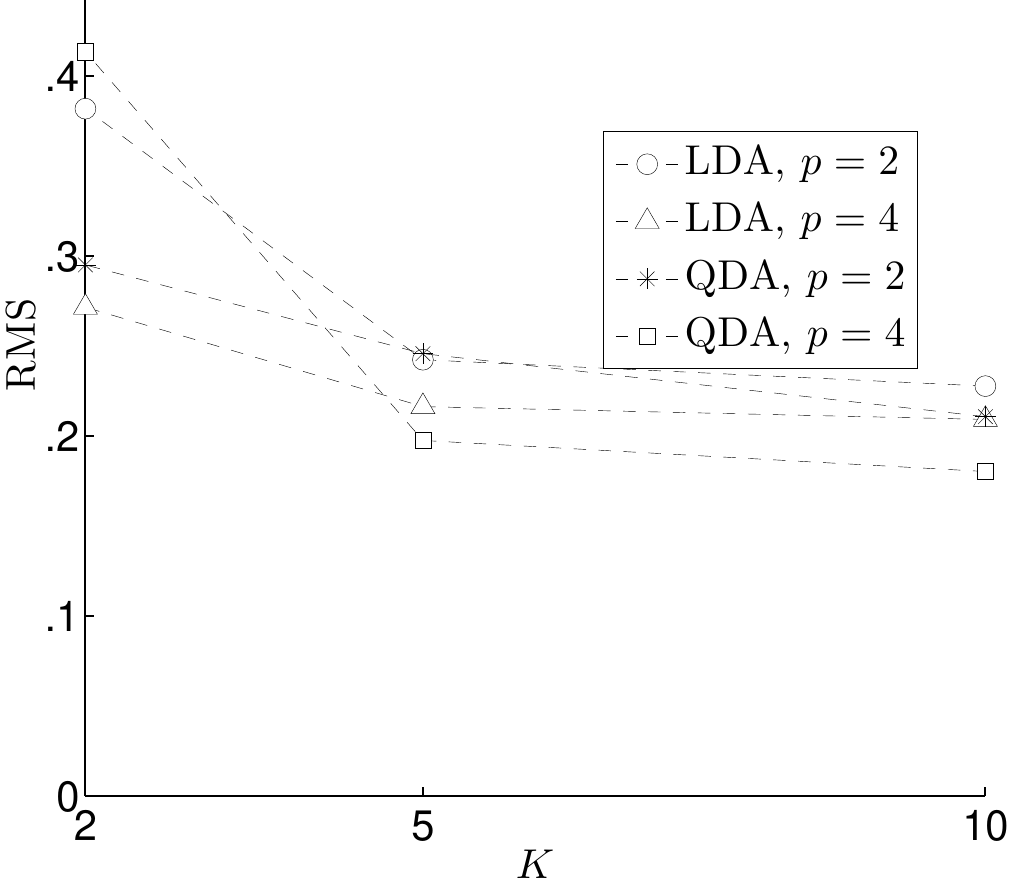}

  \includegraphics[width=0.16\textwidth]{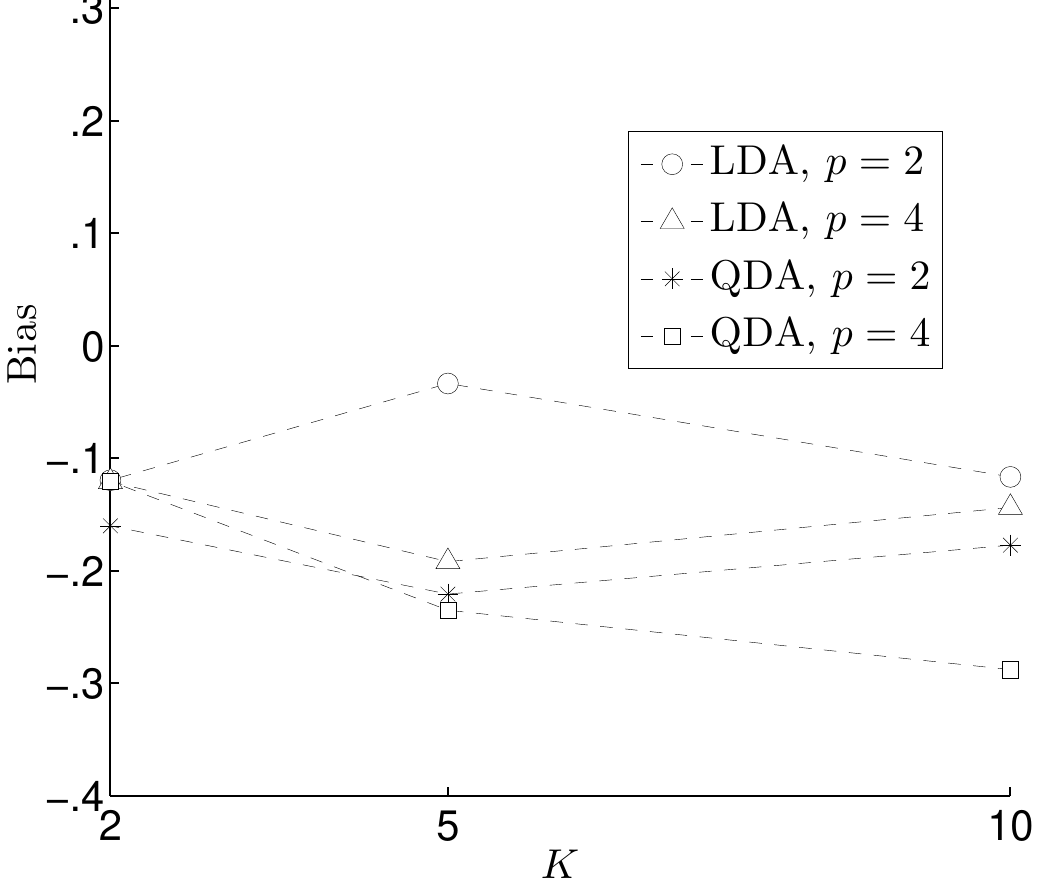}\includegraphics[width=0.16\textwidth]{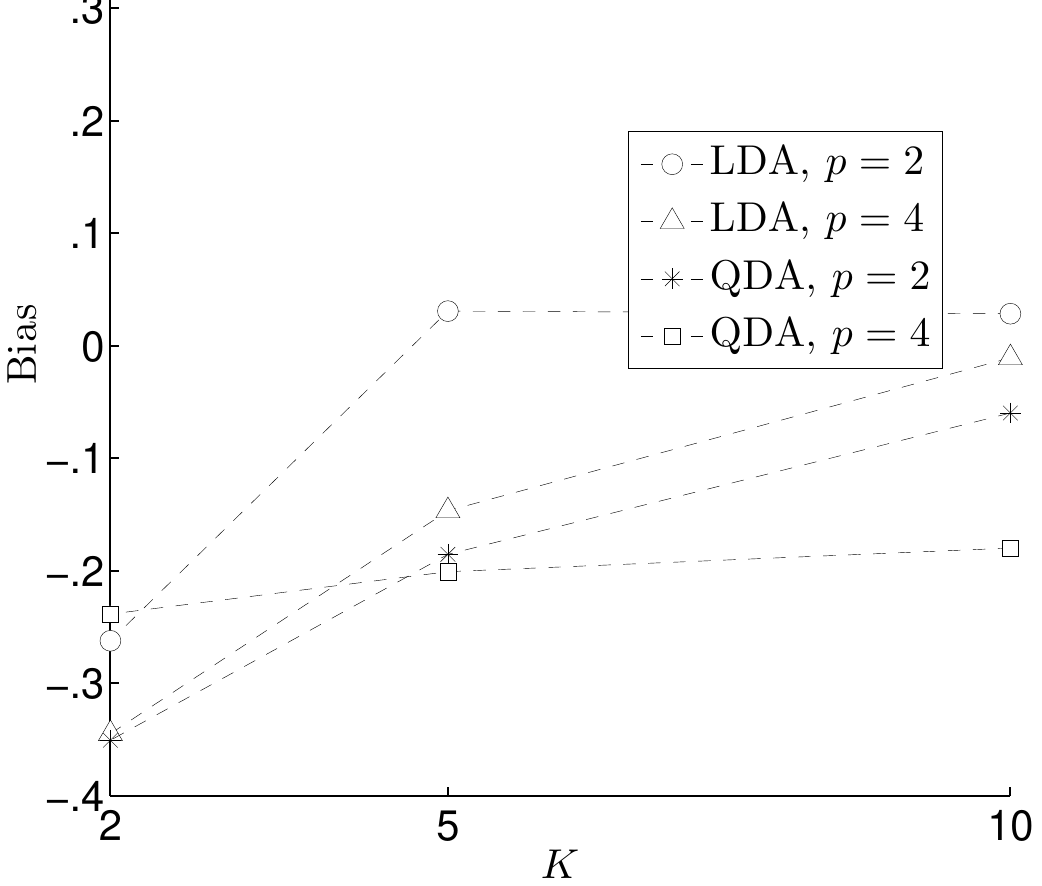}\includegraphics[width=0.16\textwidth]{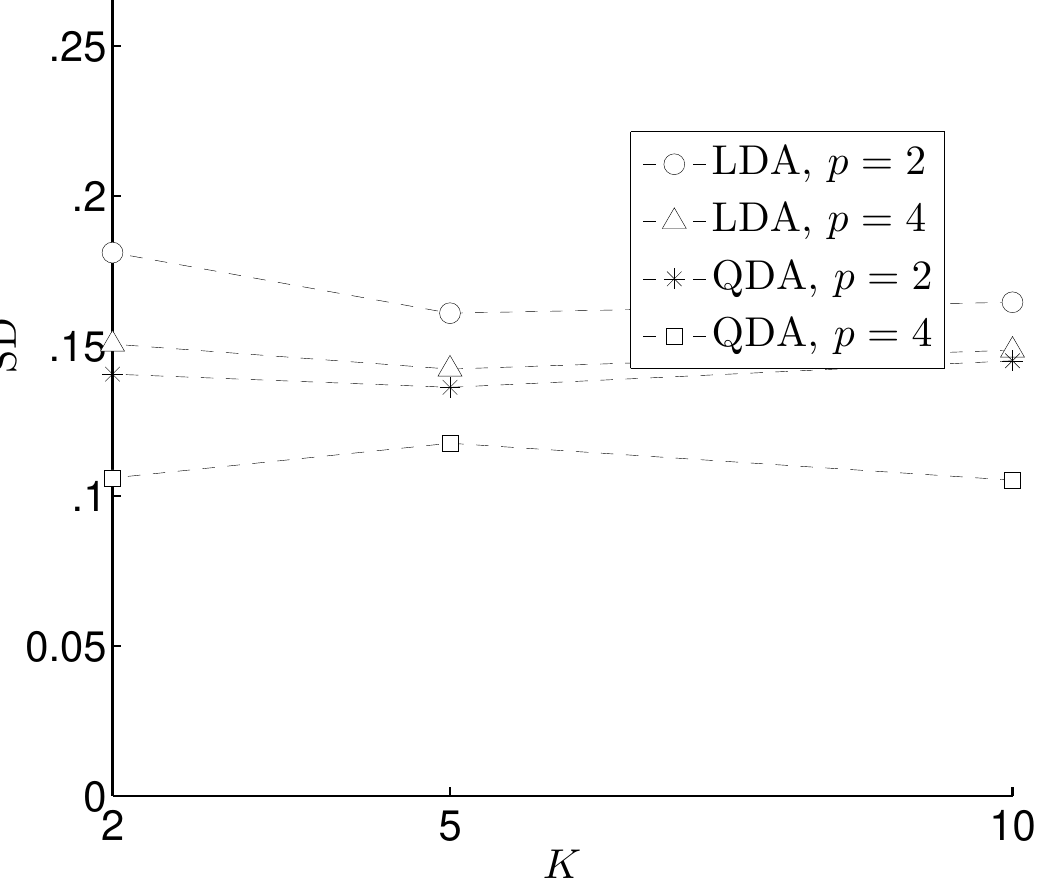}\includegraphics[width=0.16\textwidth]{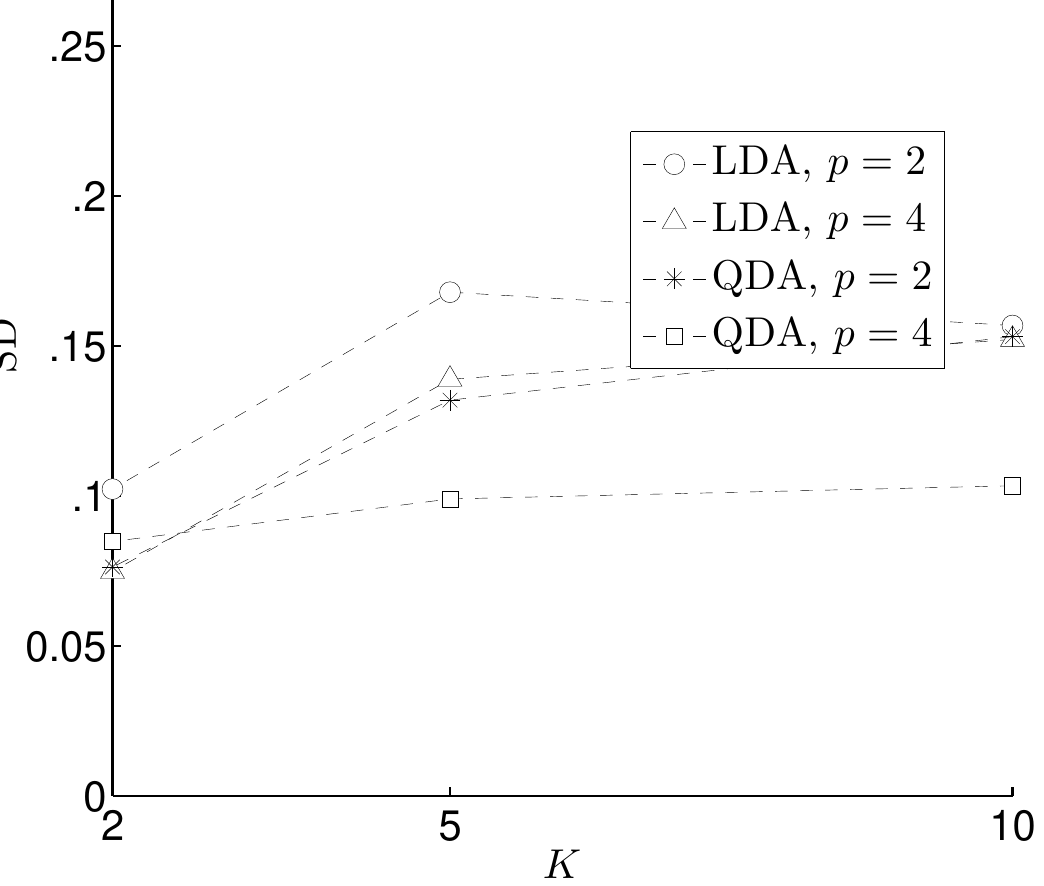}\includegraphics[width=0.16\textwidth]{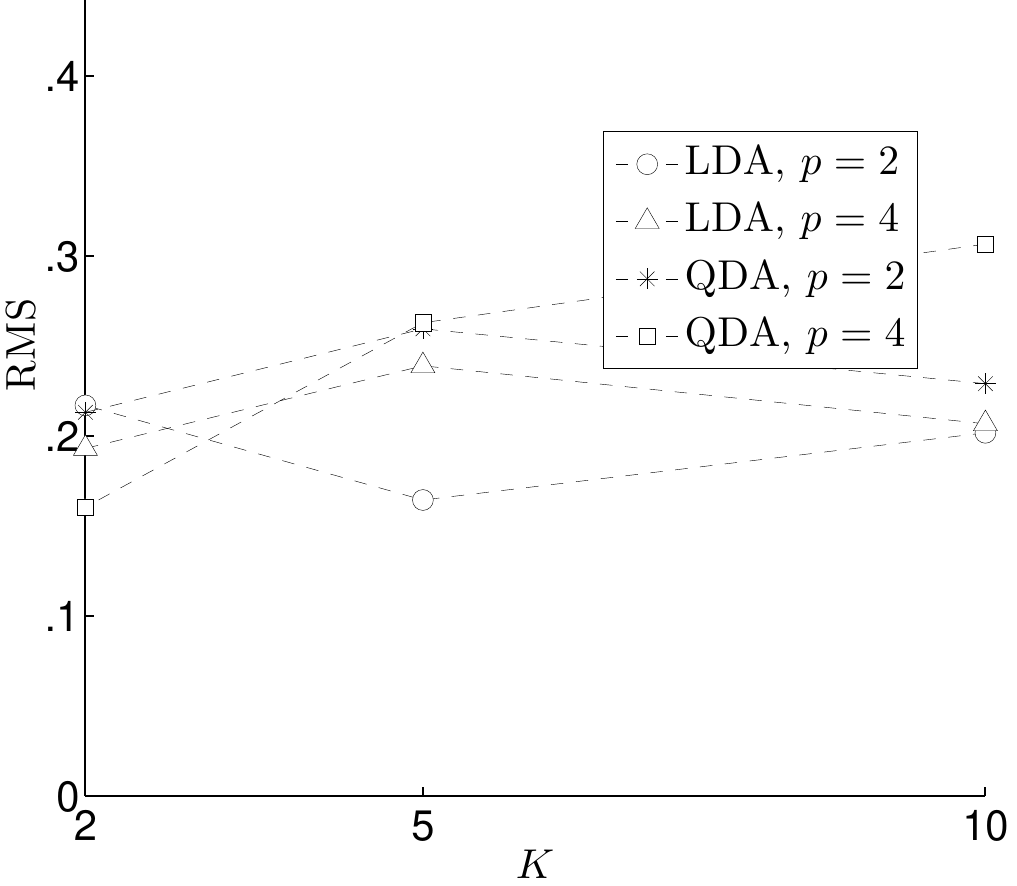}\includegraphics[width=0.16\textwidth]{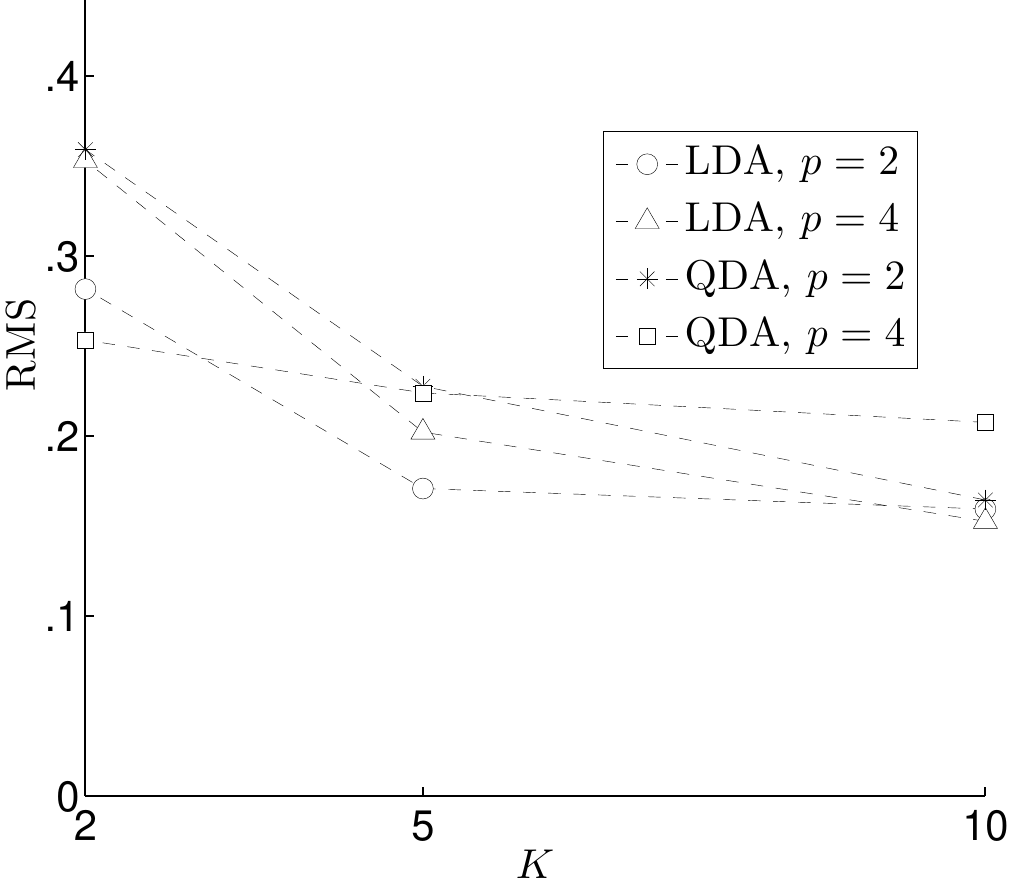}

  \includegraphics[width=0.16\textwidth]{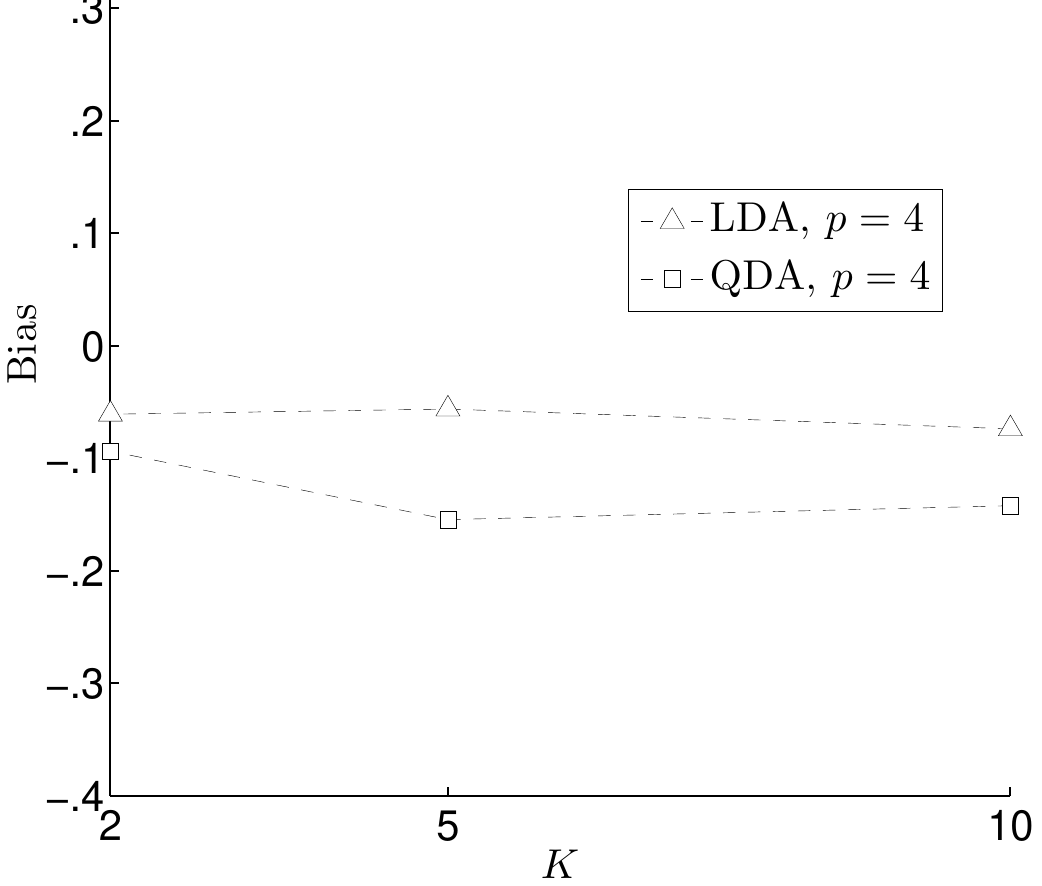}\includegraphics[width=0.16\textwidth]{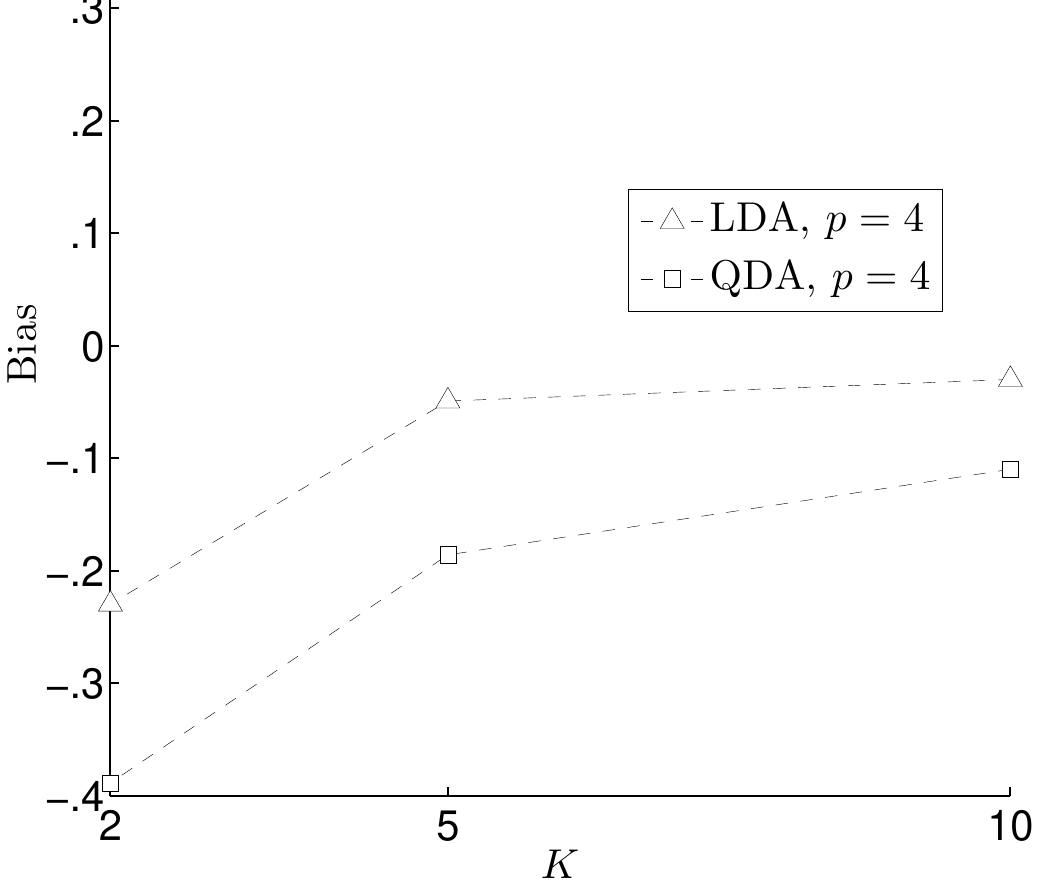}\includegraphics[width=0.16\textwidth]{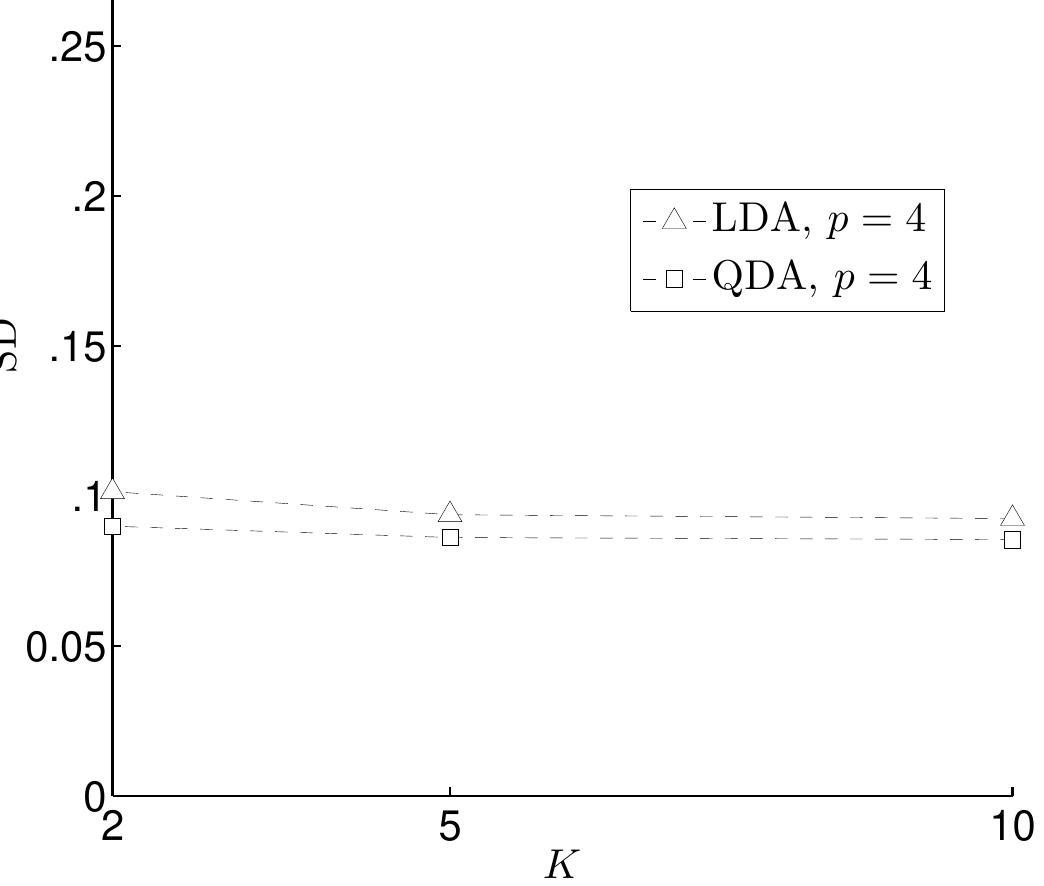}\includegraphics[width=0.16\textwidth]{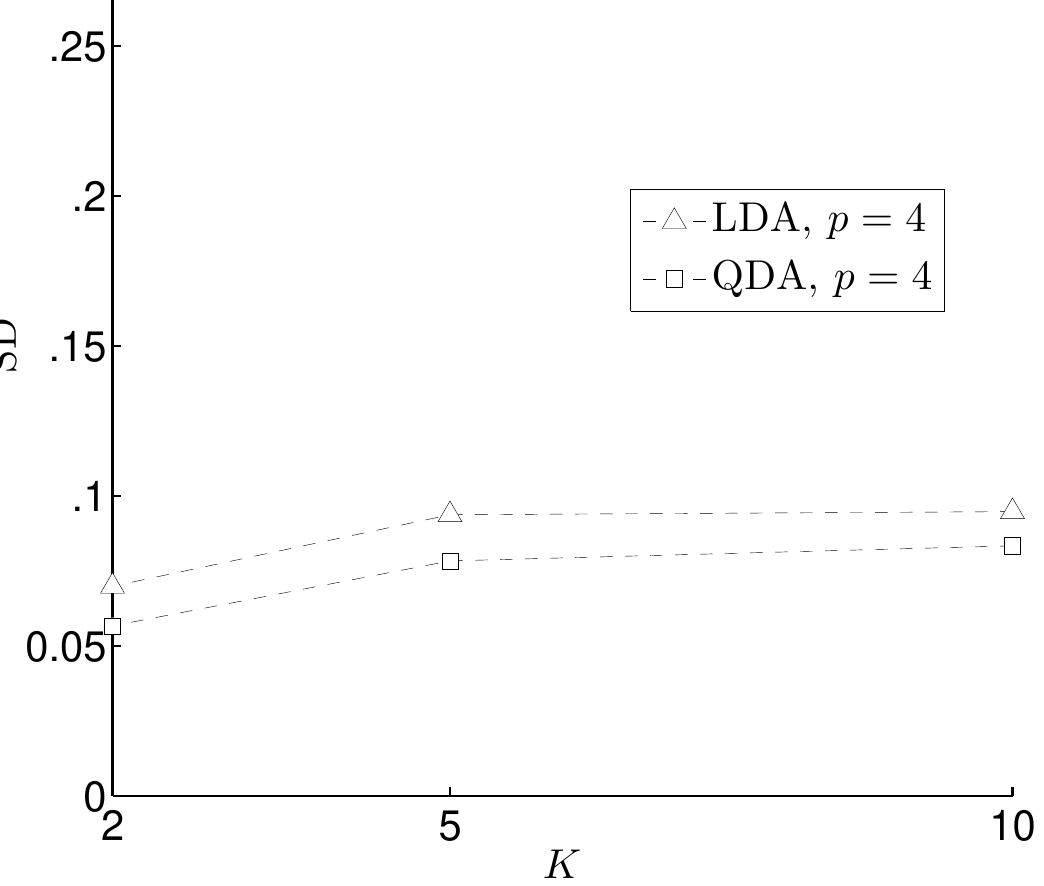}\includegraphics[width=0.16\textwidth]{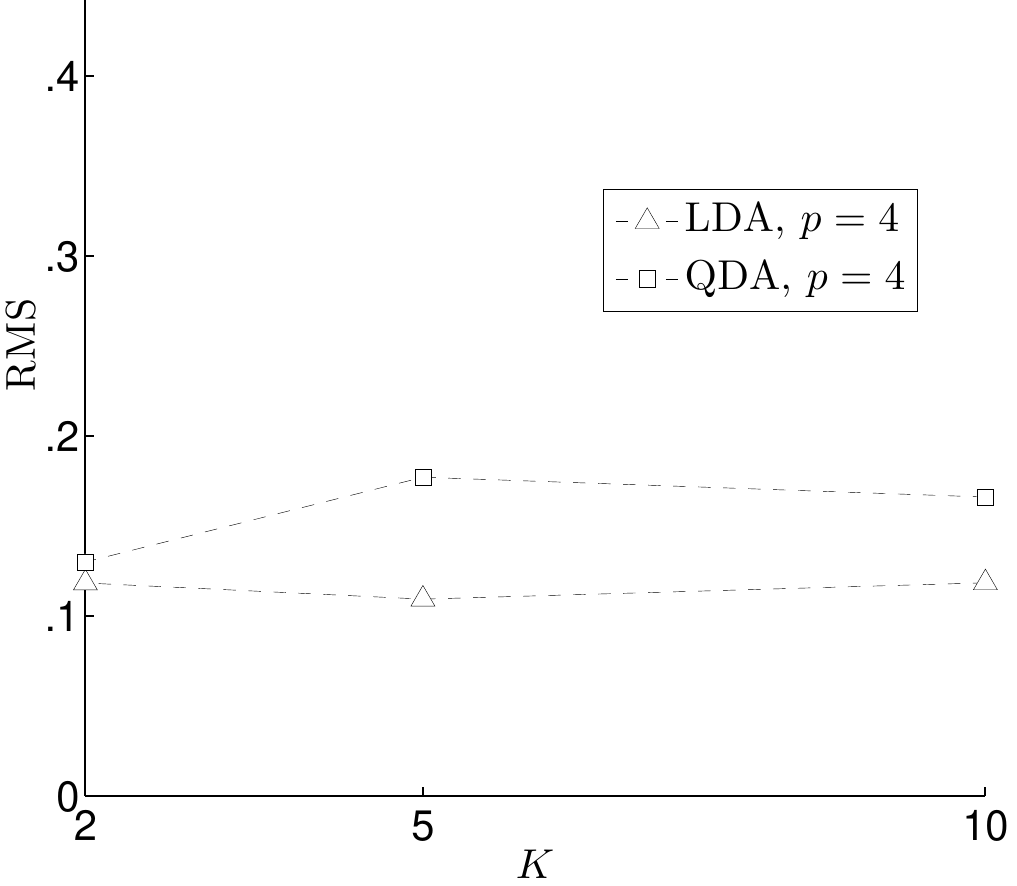}\includegraphics[width=0.16\textwidth]{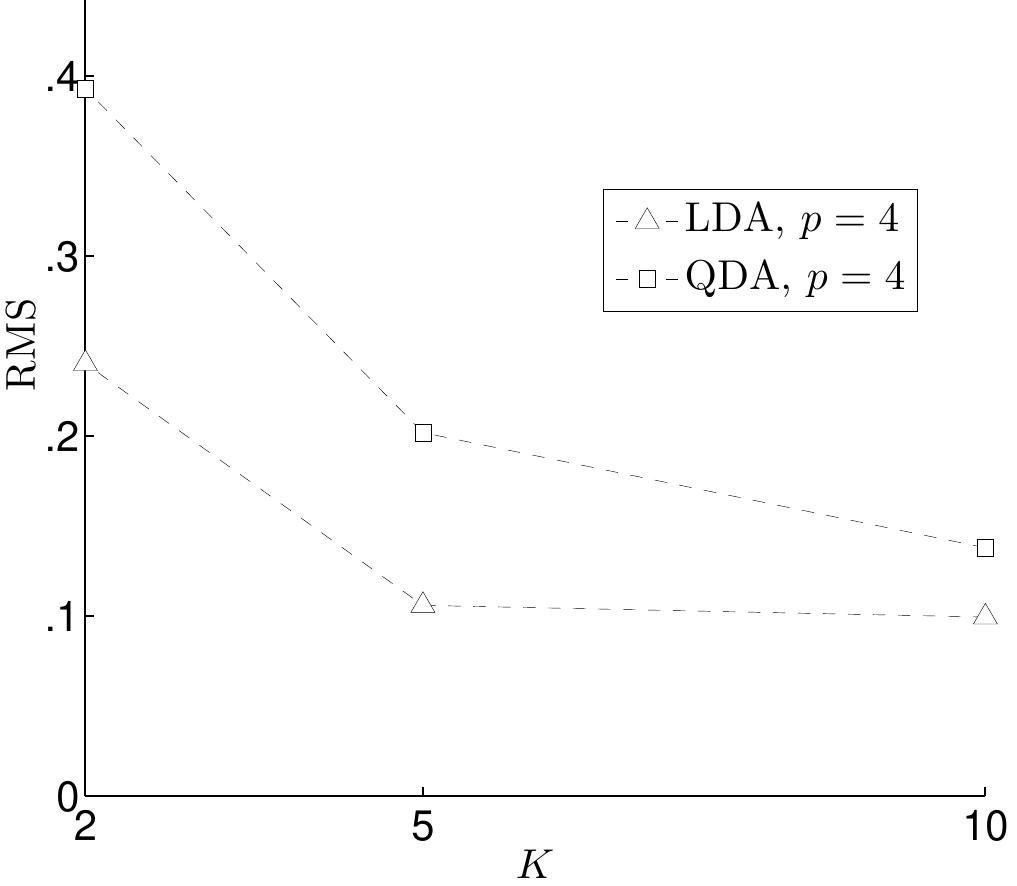}

  \caption{Comparison between the IF method and the adhoc estimator CVKR2; Bias, SD, and RMS are
    normalized to the estimand (the true SD of CVKR). The three rows of plots correspond to for
    $n=10, 20, 60$ respectively.}\label{Fign60}%
\end{figure*}

\section{Experiments and Discussion}\label{SecSimulation}
As mentioned in Section \ref{SecDisc}, we are currently planning a very comprehensive comparative
study for many versions of estimators along with their associated methods of estimating their
variance. We suffice, here, with a moderate number of experiments to explore our new IF-based method
of estimating the variance of the cross validation estimator. We tried only the LDA and QDA
classifiers with training size of 10, 20, 40, 60 per class sampled from normal distributions with 2
and 4 dimensions. We sampled observations from two normal distributions with identity covariance
matrices and zero mean vector and $c\bm{1}$ respectively, and $c$ is adjusted for class
separability.  We tried different versions of CV, namely CVKR, CVKM and CVKR with enforcing testing
on the folds $k_{1}=k_{2}$. In all experiments we repeated the CV (either in CVKR or CVKM) 1000
times. Of course CVKR converges much faster than CVKM (typically 100 repetition is adequate); we
preferred to keep both numbers of repetitions equal for the sake of comparison.

\bigskip

In general, all CV versions produce almost the same estimate of the AUC, as expected from
\cite{Yousef2019LeisurelyLookVersionsVariants-arxiv}, with the same Monte Carlo (MC) variance. For
that reason, we only show the CVKM estimate. Regarding variance estimation, the IF-based method of
the CVKM is downwardly biased with small variance. The bias decreases inversely with $K$ for the
same $n$. However, subjectively speaking, for $K=5$ we have a good compromise between the bias of
the estimate and the bias of its variance estimation. Detailed results are illustrated in
Tables~\ref{Tablen10}--\ref{Tablen60} (\APPENDIX). These tables compare the IF-based estimator with
the ad-hoc estimator (\ref{Eqadhoccvkm}). Since the variance of both the CVKM and CVKR is the same
we can use the ad-hoc estimators of CVKR \eqref{EQadhocVersions} to estimate the variance of the
CVKM as well. This comparison is illustrated in Tables \ref{Tablen10CVKR}%
--\ref{Tablen60CVKR}. From the tables, it is noticeable that the estimator
$\widehat{\operatorname*{Var}}_{2}^{CVKR}$ (that is calculated from averaging \eqref{EQadhocCVK2}
over the many repetitions of the CVKR) is better than others.

In Figures~\ref{Fign60}, we compare this estimator to the IF method using the results of tables
\ref{Tablen10CVKR}--\ref{Tablen60CVKR}. The Bias, SD, and RMS are normalized in these figure to the
estimand, i.e., the true SD of CVKR (or CVKM since they are the same for large repetitions)
calculated from the MC trials and shown in the tables.

\bigskip

From the results, it is clear that the largest bias of IF-based estimator is exercised for the small
sample size. The reason is related to the smoothness issue. E.g., for the case of $n=10$ and $K=5$
the number of training set permutations is $\binom{10}{2}=45$, which is too few! This is challenging
for the IF method, where the variance estimate is almost coming from the first term $I$
of~\eqref{EQAUCi}.

Unfortunately (this is unfortunate since we hoped that the IF method will outperform the ad hoc
estimator), the ad-hoc estimator of the 10-fold CVKM works better than the IF, which raises two
questions. First, why does the ad-hoc estimator work well? This needs more investigation and
analysis following the same route of Section \ref{SecBengio}. Second, what is the sufficient amount
of ``smoothness'' needed for the IF estimator to work well. This needs more mathematical
investigation and comparison to the LPOB estimator where the IF methods where very successful.

To get a touchy feeling of this relative smoothness issue, we compare the IF for both the bootstrap
and 10-fold CVKM. For example consider the case of QDA, $n=10,~20$, and same population parameters
as above. The IF estimator has three components, $I,~II,~$and $III$ (revisit Section
\ref{SecVarEst}). The last two terms come from the derivative of the probability of the training
sets. Large number of distinct training sets (number of permutations) is necessary for satisfactory
smoothing. Table \ref{TableBScvkm} illustrates the true SD for both estimators and the IF estimate:
once with taking into account only the first term $I$ and the other with taking into account all the
three terms $I,~II,~$and $III$. It is obvious that the first component $I$ of the IF estimator in
both cases is able to capture almost the same percentage of the variability. However, the other two
terms, $II$ and $III$, were able to capture the rest of the variability in the case of bootstrap
estimator; whereas they failed to do so in the case of CVKM. The extreme case appears when $n=10$,
where 10-fold CVKM is nothing but the CVN. In such a case the terms $II$ and $III$ are able to
capture zero percent of the variability (as was proven in Section \ref{SecVarEst}). This is a
consequence of the fact that for every observation there is only one possible training
set. (Revisiting Figure~\ref{FIGCVBSratio}) in \APPENDIX~sheds more light on the relative number of
permutations between the CVK and the luxurious bootstrap.)

%%% Local Variables:
%%% mode: latex
%%% TeX-engine: default
%%% TeX-master: "Yousef201XCV-II"
%%% End:

\section{Conclusion and Current Work in Progress}\label{SecDisc}
We considered the problem of estimating the standard error of CV-based estimators that estimate the
performance of classification rules. We derived a novel and rigorous method, for that purpose, based
on the Influence Function (IF) approach. Although the method works well in terms of the RMS error
with some bias, the ad-hoc methods available in the literature still work better than our developed
method in the finite set of experiments that we conducted. Therefore, there is a list of interesting
tasks to complement the present article; however, each may be an article itself. We summarize them
in the following precise points:
\begin{enumerate}[partopsep=0in,parsep=0in,topsep=0.1in,itemsep=0.0in,leftmargin=0.3in]
  \item More comprehensive simulation study to figure out when the IF method does win or loose.

  \item Analyzing the different variants of the ad-hoc variance estimators in terms of the
  covariance structure of the cross validation, to figure out why they work well.

  \item More investigation of the smoothness issue to figure out the relationship between the
  ``amount of smoothness'' (measured in the different possible number of permutations) and the bias
  of the method.
\end{enumerate}
In addition, and just for mathematical interest with no practical value, it may be interesting to
derive a closed form expression for the SD of the CVKM rather than the expression that involves the
terms $I,~II,~$and $III$; and to derive the closed form expression for the probability $G$ for the
case of CVKR.

% \section{Advertisement :)
% \textbf{A closed form expression for $A$ and $B$ above is tedious, and coding them directly in
%   Matlab is another tedious job; we preferred to perform this task using Mathematica. Users
%   interested in obtaining values for $A$ and $B$ for their simulations to compare with
%   (\ref{EQgdot}) may submit their request with a 50\$ check; you will receive the values of $A$ and
%   $B$ along with a 10\$ coupon for your next order; say cheese:)}

%%% Local Variables:
%%% mode: latex
%%% TeX-master: "Yousef201XCV-II"
%%% End:

\section{Acknowledgment}\label{sec:acknoledgment}
The author is grateful to the U.S. Food and Drug Administration (FDA) for funding an earlier stage
of this project, circa 2008; to both, Brandon Gallas and Weijie Chen, of the FDA, for reviewing an
earlier stage of this manuscript when it was an internal report; and to Kyle Myers for her
continuous support.

%%% Local Variables:
%%% mode: latex
%%% TeX-master: "Yousef201XCV-II"
%%% End:

{
  \scriptsize
  \bibliographystyle{model2-names}
  \bibliography{booksIhave,publications}

\begin{thebibliography}{15}
\expandafter\ifx\csname natexlab\endcsname\relax\def\natexlab#1{#1}\fi
\providecommand{\url}[1]{\texttt{#1}}
\providecommand{\href}[2]{#2}
\providecommand{\path}[1]{#1}
\providecommand{\DOIprefix}{doi:}
\providecommand{\ArXivprefix}{arXiv:}
\providecommand{\URLprefix}{URL: }
\providecommand{\Pubmedprefix}{pmid:}
\providecommand{\doi}[1]{\href{http://dx.doi.org/#1}{\path{#1}}}
\providecommand{\Pubmed}[1]{\href{pmid:#1}{\path{#1}}}
\providecommand{\bibinfo}[2]{#2}
\ifx\xfnm\relax \def\xfnm[#1]{\unskip,\space#1}\fi
%Type = Article
\bibitem[{Bengio and Grandvalet(2004)}]{Bengio2004NoUnbiasedEstKCV}
\bibinfo{author}{Bengio, Y.}, \bibinfo{author}{Grandvalet, Y.},
  \bibinfo{year}{2004}.
\newblock \bibinfo{title}{{No Unbiased Estimator of the Variance of K-Fold
  Cross-Validation}}.
\newblock \bibinfo{journal}{J. Mach. Learn. Res.} \bibinfo{volume}{5},
  \bibinfo{pages}{1089--1105}.
%Type = Article
\bibitem[{Efron and Tibshirani(1997)}]{Efron1997ImprovementsOnCross}
\bibinfo{author}{Efron, B.}, \bibinfo{author}{Tibshirani, R.},
  \bibinfo{year}{1997}.
\newblock \bibinfo{title}{{Improvements on Cross-Validation: the $.632+$
  Bootstrap Method}}.
\newblock \bibinfo{journal}{Journal of the American Statistical Association}
  \bibinfo{volume}{92}, \bibinfo{pages}{548--560}.
%Type = Article
\bibitem[{Hampel(1974)}]{Hampel1974TheInfluence}
\bibinfo{author}{Hampel, F.R.}, \bibinfo{year}{1974}.
\newblock \bibinfo{title}{{The Influence Curve and Its Role in Robust
  Estimation}}.
\newblock \bibinfo{journal}{Journal of the American Statistical Association}
  \bibinfo{volume}{69}, \bibinfo{pages}{383--393}.
%Type = Book
\bibitem[{Hampel(1986)}]{Hampel1986RobustStatistics}
\bibinfo{author}{Hampel, F.R.}, \bibinfo{year}{1986}.
\newblock \bibinfo{title}{{Robust statistics : the approach based on influence
  functions}}.
\newblock \bibinfo{publisher}{Wiley}, \bibinfo{address}{New York}.
%Type = Book
\bibitem[{Hastie et~al.(2001)Hastie, Tibshirani and
  Friedman}]{Hastie2001TheElements}
\bibinfo{author}{Hastie, T.}, \bibinfo{author}{Tibshirani, R.},
  \bibinfo{author}{Friedman, J.H.}, \bibinfo{year}{2001}.
\newblock \bibinfo{title}{{The elements of statistical learning : data mining,
  inference, and prediction}}.
\newblock \bibinfo{publisher}{Springer}, \bibinfo{address}{New York}.
%Type = Book
\bibitem[{Huber(2004)}]{Huber2004RobustStatistics}
\bibinfo{author}{Huber, P.J.}, \bibinfo{year}{2004}.
\newblock \bibinfo{title}{{Robust statistics}}.
\newblock \bibinfo{publisher}{Wiley-Interscience}, \bibinfo{address}{Hoboken,
  N.J.}
%Type = Book
\bibitem[{Randles and Wolfe(1979)}]{Randles1979IntroductionTo}
\bibinfo{author}{Randles, R.H.}, \bibinfo{author}{Wolfe, D.A.},
  \bibinfo{year}{1979}.
\newblock \bibinfo{title}{{Introduction to the theory of nonparametric
  statistics}}.
\newblock \bibinfo{publisher}{Wiley}, \bibinfo{address}{New York}.
%Type = Article
\bibitem[{Yousef(2013)}]{Yousef2013PAUC}
\bibinfo{author}{Yousef, W.A.}, \bibinfo{year}{2013}.
\newblock \bibinfo{title}{Assessing classifiers in terms of the partial area
  under the roc curve}.
\newblock \bibinfo{journal}{Computational Statistics {\&} Data Analysis}
  \bibinfo{volume}{64}, \bibinfo{pages}{51--70}.
%Type = Article
\bibitem[{Yousef(2019a)}]{Yousef2019AucSmoothness-arxiv}
\bibinfo{author}{Yousef, W.A.}, \bibinfo{year}{2019}a.
\newblock \bibinfo{title}{{AUC}: Nonparametric estiamtors and their
  smoothness}.
\newblock \bibinfo{journal}{arXiv preprint arXiv:1907.12851} .
%Type = Article
\bibitem[{Yousef(2019b)}]{Yousef2019LeisurelyLookVersionsVariants-arxiv}
\bibinfo{author}{Yousef, W.A.}, \bibinfo{year}{2019}b.
\newblock \bibinfo{title}{A leisurely look at versions and variants of the
  cross validation estimator}.
\newblock \bibinfo{journal}{arXiv preprint arXiv:1907.13413} .
%Type = Inproceedings
\bibitem[{Yousef and Chen(2009)}]{Yousef2009EstCVvariability}
\bibinfo{author}{Yousef, W.A.}, \bibinfo{author}{Chen, W.},
  \bibinfo{year}{2009}.
\newblock \bibinfo{title}{{Estimating Cross-Validation Variability}}, in:
  \bibinfo{booktitle}{Proceedings of the 2009 Joint Statistical Meeting,
  Section on Statistics in Epidemiology.}, pp. \bibinfo{pages}{3318--3326}.
%Type = Article
\bibitem[{Yousef et~al.(2009)Yousef, Kundu and
  Wagner}]{Yousef2009NonparEstThreshold}
\bibinfo{author}{Yousef, W.A.}, \bibinfo{author}{Kundu, S.},
  \bibinfo{author}{Wagner, R.F.}, \bibinfo{year}{2009}.
\newblock \bibinfo{title}{Nonparametric estimation of the threshold at an
  operating point on the roc curve}.
\newblock \bibinfo{journal}{Computational Statistics {\&} Data Analysis}
  \bibinfo{volume}{53}, \bibinfo{pages}{4370--4383}.
%Type = Inproceedings
\bibitem[{Yousef et~al.(2004)Yousef, Wagner and Loew}]{Yousef2004ComparisonOf}
\bibinfo{author}{Yousef, W.A.}, \bibinfo{author}{Wagner, R.F.},
  \bibinfo{author}{Loew, M.H.}, \bibinfo{year}{2004}.
\newblock \bibinfo{title}{{Comparison of Non-Parametric Methods for Assessing
  Classifier Performance in Terms of ROC Parameters}}, in:
  \bibinfo{booktitle}{Applied Imagery Pattern Recognition Workshop, 2004.
  Proceedings. 33rd; IEEE Computer Society}, pp. \bibinfo{pages}{190--195}.
%Type = Article
\bibitem[{Yousef et~al.(2005)Yousef, Wagner and Loew}]{Yousef2005EstimatingThe}
\bibinfo{author}{Yousef, W.A.}, \bibinfo{author}{Wagner, R.F.},
  \bibinfo{author}{Loew, M.H.}, \bibinfo{year}{2005}.
\newblock \bibinfo{title}{{Estimating the Uncertainty in the Estimated Mean
  Area Under the ROC Curve of a Classifier}}.
\newblock \bibinfo{journal}{Pattern Recognition Letters} \bibinfo{volume}{26},
  \bibinfo{pages}{2600--2610}.
%Type = Article
\bibitem[{Yousef et~al.(2006)Yousef, Wagner and Loew}]{Yousef2006AssessClass}
\bibinfo{author}{Yousef, W.A.}, \bibinfo{author}{Wagner, R.F.},
  \bibinfo{author}{Loew, M.H.}, \bibinfo{year}{2006}.
\newblock \bibinfo{title}{{Assessing Classifiers From Two Independent Data Sets
  Using ROC Analysis: a Nonparametric Approach}}.
\newblock \bibinfo{journal}{Pattern Analysis and Machine Intelligence, IEEE
  Transactions on} \bibinfo{volume}{28}, \bibinfo{pages}{1809--1817}.

\end{thebibliography}
}
\clearpage

\normalfont
\section*{Appendix}

\textbf{Proof of Eq.~\eqref{EqAUCCVKMi}}

\begin{align}
  &\widehat{AUC}^{\left(  CVKM\right)  }\notag\\
  &=\frac{1}{n_{1}n_{2}}\sum\limits_{j_{2}=1}^{n_{2}}{\sum\limits_{j_{1}=1}^{n_{1}}}\left[  \left.\sum_{m}I_{j_{2}}^{m}I_{j_{1}}^{m}\psi\left(  h_{m}\left(  j_{1}\right),h_{m}\left(  j_{2}\right)  \right)  \right/  \sum_{m}I_{j_{2}}^{m}I_{j_{1}}^{m}\right] \notag\\
  &=\frac{1}{n_{1}}{\sum\limits_{j_{1}=1}^{n_{1}}}\frac{1}{n_{2}}\sum\limits_{j_{2}=1}^{n_{2}}\left[  \left.  \sum_{m}I_{j_{2}}^{m}I_{j_{1}}^{m}\psi\left(  h_{m}\left(  j_{1}\right)  ,h_{m}\left(  j_{2}\right)\right)  \right/  \sum_{m}I_{j_{2}}^{m}I_{j_{1}}^{m}\right] \notag\\
  &  =\frac{1}{n_{1}}{\sum\limits_{i=1}^{n_{1}}}\widehat{AUC}_{1i}.
\end{align}

\bigskip

\textbf{Proof of Eq.~\eqref{EQnaiveEst-Decomp}}

\begin{align}
  &\widehat{\operatorname*{Var}}\left[  \widehat{Err}^{\left(CVK\right)  }\right] =\frac{1}{K}\frac{1}{K(K-1)}\left[  (K-1)\sum_{k=1}^{K}err_{k}^{2}-\underset{k\neq k^{\prime}}{\sum\sum}err_{k}err_{k^{\prime}}\right]\notag\\
  &=\frac{1}{K}\frac{1}{K(K-1)}\left[ (K-1)\sum_{k=1}^{K}\left(\frac{1}{n_{K}}\sum_{i\in\mathcal{K}^{-1}\left( k\right) }e_{i}\right) \left(\frac{1}{n_{K}}\sum_{i^{\prime}\in\mathcal{K}^{-1}\left( k\right) }e_{i^{\prime}}\right)\right. - \notag\\
  &\left. \underset{k\neq k^{\prime}}{\sum\sum}\left( \frac{1}{n_{K}}\sum_{i\in\mathcal{K}^{-1}\left(k\right) }e_{i}\right)\left(\frac{1}{n_{K}}\sum_{i^{\prime}\in\mathcal{K}^{-1}\left( k^{\prime}\right)}e_{i^{\prime}}\right)\right]\notag\\
  &=\frac{1}{K}\frac{1}{K\left( K-1\right) }\left[ \frac{\left(K-1\right)}{n_{K}^{2}}\sum_{k=1}^{K}\left( \sum_{i\in\mathcal{K}^{-1}\left(k\right)}e_{i}^{2}+\underset{i\neq i^{\prime};i,i^{\prime}\in\mathcal{K}^{-1}\left(k\right)}{\sum\sum}e_{i}e_{i^{\prime}}\right) \right. - \notag\\
  &\left.  \frac{1}{n_{K}^{2}}\underset{k\neq k^{\prime}}{\sum\sum}\sum _{i\in\mathcal{K}^{-1}\left(k\right)}\sum_{i^{\prime}\in\mathcal{K}^{-1}\left( k^{\prime}\right) }e_{i}e_{i^{\prime}}\right].
\end{align}

\bigskip

\textbf{Proof of Eq.~\eqref{EqVarExpEst}}

\begin{align}
  &\operatorname*{E}\widehat{\operatorname*{Var}}\left[  \widehat{Err}^{\left(
    CVK\right)  }\right]  =\frac{1}{K}\frac{1}{K\left(  K-1\right)  }\times\notag\\
  &\left[
    \frac{\left(  K-1\right)  }{n_{K}^{2}}K\left(  n_{K}\left(  \sigma^{2}+\mu
    ^{2}\right)  +\left(  n_{K}^{2}-n_{K}\right)  \left(  \omega+\mu^{2}\right)
    \right)  -\left(  K^{2}-K\right)  \left(  \gamma+\mu^{2}\right)  \right]
    \nonumber\\
  &=\frac{1}{n}\sigma^{2}+\frac{\left(  n_{K}-1\right)  }{n}\omega-\frac{\gamma}{K}.
\end{align}

\begin{proof}[Proof of Lemma~\ref{LemmaBiasOfVar}]
  It is easy to decompose (\ref{EQnaiveEst}) as%
  \begin{align}
    \widehat{\operatorname*{Var}}\left[  \widehat{Err}^{\left(CVK\right)  }\right] &=\frac{1}{K}\frac{1}{K\left( K-1\right) } \times \notag\\
                                                                                  &\left[ \frac{\left(K-1\right)}{n_{K}^{2}}\sum_{k=1}^{K}\left( \sum_{i\in\mathcal{K}^{-1}\left(k\right)}e_{i}^{2}+\underset{i\neq i^{\prime};i,i^{\prime}\in\mathcal{K}^{-1}\left(k\right)}{\sum\sum}e_{i}e_{i^{\prime}}\right) \right. - \notag\\
                                                                                  &\left.  \frac{1}{n_{K}^{2}}\underset{k\neq k^{\prime}}{\sum\sum}\sum _{i\in\mathcal{K}^{-1}\left(k\right)}\sum_{i^{\prime}\in\mathcal{K}^{-1}\left( k^{\prime}\right) }e_{i}e_{i^{\prime}}\right].\label{EQnaiveEst-Decomp}
  \end{align}
  Taking the expectation of both sides and substituting from (\ref{EQcomp1})--(\ref{EQcomp3}) with%
  \begin{subequations}
    \begin{align}
      \operatorname*{E}e_{i}^{2}  &  =\sigma^{2}+\mu^{2},\\
      \operatorname*{E}\left[  e_{i}e_{i^{\prime}}\right]   &  =\omega+\mu^{2},~i\neq j,~\mathcal{K}\left(  i\right)  =\mathcal{K}\left(  j\right) \\
      \operatorname*{E}\left[  e_{i}e_{i^{\prime}}\right]   &  =\gamma+\mu^{2},~i\neq j,~\mathcal{K}\left(  i\right)  \neq\mathcal{K}\left(  j\right)
    \end{align}
  \end{subequations}%
  gives%
  \begin{align}
    \operatorname*{E}\widehat{\operatorname*{Var}}\left[  \widehat{Err}^{\left(CVK\right)  }\right]  &=\frac{1}{K}\frac{1}{K\left(  K-1\right)  }\times\notag\\
                                                                                                    &\biggl[
                                                                                                      \frac{\left(  K-1\right)  }{n_{K}^{2}}K\left(  n_{K}\left(  \sigma^{2}+\mu
                                                                                                      ^{2}\right)  +\left(  n_{K}^{2}-n_{K}\right)  \left(  \omega+\mu^{2}\right)
                                                                                                      \right)\biggr.\notag\\
                                                                                                    &\biggl.-\left(  K^{2}-K\right)  \left(  \gamma+\mu^{2}\right)  \biggr]\nonumber\\
                                                                                                    &=\frac{1}{n}\sigma^{2}+\frac{\left(  n_{K}-1\right)  }{n}\omega-\frac{\gamma}{K}.\label{EqVarExpEst}
  \end{align}
  Then, subtracting (\ref{EQvarCV}) from this previous equation gives the bias%
  \begin{align}
    &\operatorname*{Bias}\left(  \widehat{\operatorname*{Var}}\left[  \widehat{Err}^{\left(CVK\right)  }\right]  \right)=\operatorname*{E}\widehat{\operatorname*{Var}}\left[  \widehat{Err}^{\left(  CVK\right)  }\right]-\operatorname*{Var}\left[  \widehat{Err}^{\left(  CVK\right)  }\right] \notag\\
    &  =\left(  \frac{1}{n}\sigma^{2}+\frac{\left(  n_{K}-1\right)  }{n}\omega-\frac{\gamma}{K}\right)  -\left(  \frac{1}{n}\sigma^{2}+\frac{n_{K}-1}{n}\omega+\frac{n-n_{K}}{n}\gamma\right) \notag\\
    &  =-\gamma.
  \end{align}
\end{proof}

\begin{proof}[\textbf{Proof 1 of Lemma~\ref{lem:prob}}]
  We have to observe the testing fold
  and whether this fold includes the perturbed observation $i$ or not. If it appears in the testing
  fold then the first withdrawn observation has the probability $\widehat{f}_{1_{\varepsilon,i}%
  }(j_{1})=\left( 1-\varepsilon\right) /n_{1},~i\notin j_{1}$. The second will have a probability of
  $\left( 1-\varepsilon\right) /\left( n_{1}-1\right) $, and so on. Since ordering is not important,
  including the factor $(n_{1}-n_{1K})!$ takes care of all possible permutations. Therefore we have%
  \begin{align}
    g_{1\varepsilon,i}  &  =\frac{1-\varepsilon}{n_{1}}\cdot\frac{1-\varepsilon}{n_{1}-1}\cdot\cdot\cdot\frac{1-\varepsilon}{n_{1}-\left(  n_{1}-n_{1K}-1\right)  }\cdot\left(  n_{1}-n_{1K}\right)  !\\
                       &  =\frac{1}{\binom{n_{1}}{n_{1}-n_{1K}}}\left(  1-\varepsilon\right)^{\left(  n_{1}-n_{1K}\right)  },\\
    g_{1\varepsilon,i}^{\cdot}\left(  0\right)   &  =\left.  \frac{\partial g_{\varepsilon,i}}{\partial\varepsilon}\right\vert _{\varepsilon=0}\\
                       &  =\frac{-\left(  n_{1}-n_{1K}\right)  }{\binom{n_{1}}{n_{1K}}},\\
    \frac{g_{1\varepsilon,i}^{\cdot}\left(  0\right)  }{g_{10}}  &  =n_{1K}-n_{1}.\label{EQgdot1}%
  \end{align}
  On the other hand, if $i$ does not appear in the testing fold, i.e., it appears in the training set,
  this means that in addition to withdrawing $i$ we withdraw $n_{1}-n_{1K}-1$ observations. There are
  $n_{1}-n_{1K}$ permutations depending on at which of the $n_{1}-n_{1K}$ positions the observation $i$
  will appear. For each position occupied by $i$ we have $(n_{1}-n_{1K}-1)!$ different permutations of
  the other $n_{1}-n_{1K}-1$ observations. Now, we can write $g_{1\varepsilon,i}$ as a summation over
  the $n_{1}-n_{1K}$ positions as
  \begin{subequations}
    \begin{align}
      g_{_{1}\varepsilon,i}  &  =\left(  n_{1}-n_{1K}-1\right)!\times\\\notag
                           &\sum_{r=1}^{n_{1}-n_{1K}}\frac{1-\varepsilon}{n_{1}}\cdot\cdot\cdot\frac{1-\varepsilon}{n_{1}-\left(  r-2\right)  }\cdot\left[  \frac{1-\varepsilon}{n_{1}-\left(r-1\right)  }+\varepsilon\right]  \cdot\frac{1}{n_{1}-r}\cdot\cdot\cdot \notag\\
                           &\frac{1}{n_{1}-\left(  n_{1}-n_{1K}-1\right)  }\label{EQg1}\\
                           &  =\frac{\left(  n_{1}-n_{1K}-1\right)  !}{P_{\left(  n_{1}-n_{1K}\right)}^{n}}\sum_{r=1}^{n_{1}-n_{1K}}\left(  1-\varepsilon\right)  ^{\left(r-1\right)  }\left(  1-\varepsilon+\varepsilon\left(  n_{1}-\left(r-1\right)  \right)  \right)\\
                           &  =\frac{1}{\left(  n_{1}-n_{1K}\right)  \binom{n_{1}}{n_{1}-n_{1K}}}\sum_{r=1}^{n_{1}-n_{1K}}\left(  1-\varepsilon\right)  ^{\left(  r-1\right)}\left(  \varepsilon n_{1}-r\varepsilon+1\right)  .
    \end{align}
  \end{subequations}
  Notice that the first probability term appearing in (\ref{EQg1}) after the rectangular brackets is
  $1/\left( n_{1}-r\right) $ because all remaining observations now have equal probability after
  withdrawing the perturbed observation $i$. The derivative of $g_{1\varepsilon,i}$ is given by
  \begin{subequations}
    \begin{align}
      g_{1\varepsilon,i}^{\cdot}\left(  0\right)   &  =\frac{1}{\left(  n_{1}-n_{1K}\right)  \binom{n_{1}}{n_{1K}}}\sum_{r=1}^{n_{1}-n_{1K}}\left.\frac{\partial}{\partial\varepsilon}\left(  \left(  1-\varepsilon\right)^{\left(  r-1\right)  }\left(  \varepsilon n_{1}-r\varepsilon+1\right)\right)  \right\vert _{\varepsilon=0}\\
                                                 &  =\frac{1}{\left(  n_{1}-n_{1K}\right)  \binom{n_{1}}{n_{1K}}}\sum_{r=1}^{n_{1}-n_{1K}}\left(  n_{1}-2r+1\right) \\
                                                 &  =\frac{1}{\left(  n_{1}-n_{1K}\right)  \binom{n_{1}}{n_{1K}}}n_{1K}\left(n_{1}-n_{1K}\right) \\
                                                 &  =\frac{n_{1K}}{\binom{n_{1}}{n_{1K}}}. \label{EQgdot2}%
    \end{align}
  \end{subequations}
  Then, we can write
  \[\frac{g_{1\varepsilon,i}^{\cdot}\left( 0\right) }{g_{10}}=n_{1K}.\]%
  Compactly, we can compile (\ref{EQgdot1}) and (\ref{EQgdot2}) as%
  \begin{subequations}
    \begin{align}
      \frac{g_{1\varepsilon,i}^{\cdot}\left(  0\right)  }{g_{10}}  &  =\left(n_{1K}-n_{1}\right)  I_{i}^{m}+n_{1K}\left(  1-I_{i}^{m}\right) \\
                                                                &  =n_{1K}-I_{i}^{m}n_{1}, \label{EQgdot}%
    \end{align}
  \end{subequations}
  where $I_{i}^{m}$ is an indicator for whether $i$ belongs to the first testing fold of the repetition
  $m$.
\end{proof}

\begin{proof}[\textbf{Proof 2 of Lemma~\ref{lem:prob}}] This other approach relies on a different
  perturbation method, but gives very similar results to the expressions of the Lemma; however, it
  lacks a closed form expression.

  The probability $g_{\varepsilon,i}$ of a training set, i.e., the
  $K-1$ folds is the same as the probability of obtaining one testing partition; hence for $i=j_{1}$
  we have
  $g_{\varepsilon,i} =\frac{1}{\binom{n_{1}-1}{n_{1K}-1}}\cdot\frac{1}{\binom{n_{2}}{n_{2K}}}$ and
  $g_{\varepsilon,i}^{\cdot} =0$. If $i\neq j_{1}$ then either $i$ is sampled in the same partition
  of $j_{1}$ (i.e., not included in the training set) or not (i.e., included in the training
  set). First, suppose that $i$ is included in the training set. We now sample $n_{1K}-1$ from
  $n_{1}-1$ observations. Not all of the $n_{1K}-1$ have equal probabilities; each observation
  $i^{\prime}$ of them has a probability
  $f_{\varepsilon,i}\left( i^{\prime}\right)
  =\frac{\frac{1-\varepsilon}{n_{1}}+\varepsilon\delta_{ii^{\prime}}}{1-\frac{1-\varepsilon}{n_{1}}}
  =\allowbreak\frac{1+\varepsilon n_{1}\delta_{ii^{\prime}}-\varepsilon}{\varepsilon+n_{1}-1}$,
  where the normalization factor $1-\frac{1-\varepsilon}{n_{1}}$ accounts for the left-out
  observation $j_{1}$. We will sample the $n_{1K}-1$ observations successively and normalize by the
  remaining probability measure after each withdraw. For the following discussion, let
  $ p =f_{\varepsilon,i}\left( i^{\prime}\neq i\right)
  =\frac{1-\varepsilon}{\varepsilon+n_{1}-1}$. Then,
  $ p_{i} =f_{\varepsilon,i}\left( i^{\prime}=i\right) =\allowbreak\frac{1+\varepsilon
    n_{1}-\varepsilon}{\varepsilon+n_{1}-1}$. Then, $g_{\varepsilon,i}$ and its derivative are given
  by%
  \begin{align}
    g_{\varepsilon,i}  &  =p\cdot\frac{p}{1-p}\cdot\frac{p}{1-2p}\cdot\cdot\cdot\frac{p}{1-\left(  n_{1K}-2\right)  p}\cdot\left(  n_{1K}-1\right)!\cdot\frac{1}{\binom{n_{2}}{n_{2K}}}\notag\\
                      &  =\frac{p^{\left(  n_{1K}-1\right)  }}{\prod\limits_{r=1}^{n_{1K}-2}1-rp}\cdot\left(  n_{1K}-1\right)  !\cdot\frac{1}{\binom{n_{2}}{n_{2K}}},\notag\\
    \left.  g_{\varepsilon,i}^{\cdot}\right\vert _{\varepsilon=0}  &  =\left.\frac{\partial g_{\varepsilon,i}}{\partial p}\frac{\partial p}{\partial \varepsilon}\right\vert _{\varepsilon=0}\notag\\
                      &=\left.  \frac{\partial g_{\varepsilon,i}}{\partial p}\right\vert_{p=\frac{1}{n_{1}-1}}\cdot\frac{-n_{1}}{\left(  n_{1}-1\right)  ^{2}}\cdot\left(  n_{1K}-1\right)  !\cdot\frac{1}{\binom{n_{2}}{n_{2K}}}\notag\\
                      &  =\frac{1}{\binom{n_{2}}{n_{2K}}}A.\notag
  \end{align}
  It should be obvious that the value
  $g_{\varepsilon,i}\left( 0\right)
  =\frac{1}{\binom{n_{2}}{n_{2K}}}\cdot\frac{1}{\binom{n_{1}-1}{n_{1K}-1}}$. When the
  $i$\textsuperscript{th} observation included in the testing fold this means that in addition to this
  $i$\textsuperscript{th} observation, we withdraw $n_{1K}-2$ observations. There are $n_{1K}-1$
  permutations depending on in which $n_{1K}-1$ position the observation $i$ will appear. If it
  appears in the $r^{\text{th}}$ position then $g_{\varepsilon,i}$ will be equal to the summation over
  these $n_{1K}-1$ permutations giving%
  \begin{align}
    g_{\varepsilon,i}&=\frac{1}{\binom{n_{2}}{n_{2K}}}\cdot P_{_{\left(n_{1K}-2\right)  }}^{\left(  n_{1}-2\right)  }\cdot \sum_{r=1}^{n_{1K}-1}\frac{p}{1}\cdot\frac{p}{1-p}\cdot\cdot\cdot\frac{p}{1-\left(  r-2\right)  p}\cdot\notag\\
                    &\frac{p_{i}}{1-\left(  r-1\right)  p}\cdot\frac{p}{1-\left(  \left(  r-1\right)  p+p_{i}\right)  }\cdot\cdot\cdot\frac{p}{1-\left(  \left(  n_{1K}-3\right)  p+p_{i}\right)  }\notag\\
                    &  =\frac{p^{\left(  n_{1K}-2\right)  }p_{i}\binom{n_{1}-2}{_{n_{1K}-2}}\left(  n_{1K}-2\right)  !}{\binom{n_{2}}{n_{2K}}}\times\notag\\
                    &\sum_{r=1}^{n_{1K}-1}\frac{1}{\prod\limits_{j=1}^{r}\left(  1-(j-1)p\right)  \prod\limits_{j=r+1}^{n_{1K}-1}\left(  1-\left(  j-2\right)  p-p_{i}\right)  },\notag
  \end{align}
  which leads to $g_{\varepsilon,i}^{\cdot} =\frac{1}{\binom{n_{2}}{n_{2K}}}B$. The factorial terms
  account for the permutations of the observations before and after the appearance of $i$. Combining
  the above equations leads to
  \begin{equation}
    g_{\varepsilon,i}^{\cdot}=\frac{1}{\binom{n_{2}}{n_{2K}}}\left(1-\delta_{ij_{1}}\right)  \left(  A\left(  1-I_{i}\right)  +BI_{i}\right),
  \end{equation}
  where $I_{i}$ indicates whether $i$ is included in the testing set or not.
\end{proof}

\textbf{Proof of Eq.~\eqref{eq:9}}

\begin{align}
  \widehat{AUC}^{\left(  CVKR\right)  }  &  =\frac{1}{n_{1}n_{2}}\sum_{j_{2}=1}^{n_{2}}\sum_{j_{1}=1}^{n_{1}}\left[  \left.  \sum_{m}\psi\left(h_{m}\left(  j_{1}\right)  ,h_{m}\left(  j_{2}\right)  \right)  \right/M\right]  ,\notag\\
  \widehat{AUC}_{\varepsilon,i}^{\left(  CVKR\right)  }  &  =\sum\limits_{j_{2}=1}^{n_{2}}{\sum\limits_{j_{1}=1}^{n_{1}}{\hat{f}_{1_{\varepsilon,i}}(j_{1})\hat{f}_{2_{\varepsilon,i}}(j_{2})}}\left[  \sum_{m}\psi\left(h_{m}\left(  j_{1}\right)  ,h_{m}\left(  j_{2}\right)  \right)  G_{\varepsilon,i}\right]  ,\notag\\
                                        &  =\sum\limits_{j_{2}=1}^{n_{2}}{\sum\limits_{j_{1}=1}^{n_{1}}{A}}\left(\varepsilon\right)  B\left(  \varepsilon\right).\notag
\end{align}
Then.
\begin{align}
  \hat{U}_{1_{i}}  &  =\sum_{j_{2}}\sum_{j_{1}}\left(  A^{\cdot}\left(0\right)  B\left(  0\right)  +A\left(  0\right)  B^{\cdot}\left(  0\right)\right)\notag\\
                 &  =\sum\limits_{j_{2}=1}^{n_{2}}{\sum\limits_{j_{1}=1}^{n_{1}}}\left[\frac{1}{n_{2}}\left(  \delta_{ij_{1}}-1/n_{1}\right)  \left[  \sum_{m}\psi\left(  h_{m}\left(  j_{1}\right)  ,h_{m}\left(  j_{2}\right)  \right)G_{0}\right]+\right.\notag\\
                 &\left. \frac{1}{n_{1}n_{2}}\sum_{m}\psi\left(  h_{m}\left(j_{1}\right)  ,h_{m}\left(  j_{2}\right)  \right)  \left.  G_{\varepsilon,i}^{\cdot}\right\vert _{\varepsilon=0}\right] \\
                 &  =\widehat{AUC}_{1i}-\widehat{AUC}^{\left(  CVKR\right)  }+ \notag\\
                 &\frac{1}{n_{1}n_{2}}\sum\limits_{j_{2}=1}^{n_{2}}{\sum\limits_{j_{1}=1}^{n_{1}}}\sum_{m}\psi\left(  h_{m}\left(  j_{1}\right)  ,h_{m}\left(  j_{2}\right)\right)  \left.  G_{\varepsilon,i}^{\cdot}\right\vert _{\varepsilon=0},
\end{align}

\begin{proof}[Proof of Lemma~\ref{LemmaSD}]
  $\hat{U}_{1_{i}}=\partial AUC_{\varepsilon,i}/\delta\varepsilon$ is given by
  \begin{align}
    \hat{U}_{1_{i}}&=\underset{I}{\underbrace{\sum\limits_{j_{2}=1}^{n_{2}}
                   {\sum\limits_{j_{1}=1}^{n_{1}}}A^{\cdot}\left( 0\right) \frac{B\left( 0\right) }{C\left( 0\right)
                   }}}+\underset{II}{\underbrace{\sum
                   \limits_{j_{2}=1}^{n_{2}}{\sum\limits_{j_{1}=1}^{n_{1}}A}\left( 0\right) \frac{B^{\cdot}\left(
                   0\right) }{C\left( 0\right) }}}-\notag\\
                 &\underset{III}{\underbrace{\sum\limits_{j_{2}=1}^{n_{2}}{\sum\limits_{j_{1}=1}^{n_{1}}A}\left( 0\right) \frac{B\left( 0\right) C^{\cdot}\left( 0\right) }{C^{2}\left( 0\right)}}},~i=1,\ldots,n_{1},
  \end{align}
  \begin{align}
    I  &  =\sum\limits_{j_{2}=1}^{n_{2}}{\sum\limits_{j_{1}=1}^{n_{1}}}\frac{1}{n_{2}}\left(  \delta_{ij_{1}}-1/n_{1}\right) \times\notag\\
       &\left[  \left.  \sum_{m}I_{j_{2}}^{m}I_{j_{1}}^{m}\psi\left(  h_{m}\left(  j_{1}\right),h_{m}\left(  j_{2}\right)  \right)  \right/  \sum_{m}I_{j_{2}}^{m}I_{j_{1}}^{m}\right] \notag\\
       &  =\widehat{AUC}_{1i}-\widehat{AUC}^{\left(  CVKM\right)  },\\
    \widehat{AUC}_{1i}  &  =\frac{1}{n_{2}}\sum\limits_{j_{2}=1}^{n_{2}}\left[\left.  \sum_{m}I_{j_{2}}^{m}I_{i}^{m}\psi\left(  h_{m}\left(  i\right),h_{m}\left(  j_{2}\right)  \right)  \right/  \sum_{m}I_{j_{2}}^{m}I_{i}^{m}\right] ,\notag
  \end{align}
  \begin{align}
    II  &  =\frac{1}{n_{1}n_{2}}\sum\limits_{j_{2}=1}^{n_{2}}{\sum\limits_{j_{1}=1}^{n_{1}}}\left[  \frac{\sum_{m}I_{j_{2}}^{m}I_{j_{1}}^{m}\psi\left(h_{m}\left(  j_{1}\right)  ,h_{m}\left(  j_{2}\right)  \right)  g_{\varepsilon,i}^{\cdot}\left(  0\right)  }{\sum_{m}I_{j_{2}}^{m}I_{j_{1}}^{m}g\left(0\right)  }\right] \notag\\
        &  =\frac{1}{n_{1}n_{2}}\sum\limits_{j_{2}=1}^{n_{2}}{\sum\limits_{j_{1}=1}^{n_{1}}}\left[  \frac{\sum_{m}I_{j_{2}}^{m}I_{j_{1}}^{m}\psi\left(h_{m}\left(  j_{1}\right)  ,h_{m}\left(  j_{2}\right)  \right)  \frac{g_{\varepsilon,i}^{\cdot}\left(  0\right)  }{g\left(  0\right)  }}{\sum_{m}I_{j_{2}}^{m}I_{j_{1}}^{m}}\right] \notag
  \end{align}
  \begin{align}
    III  &  =\frac{1}{n_{1}n_{2}}\sum\limits_{j_{2}=1}^{n_{2}}{\sum\limits_{j_{1}=1}^{n_{1}}}\frac{\left(  \sum_{m}I_{j_{2}}^{m}I_{j_{1}}^{m}\psi\left(h_{m}\left(  j_{1}\right)  ,h_{m}\left(  j_{2}\right)  \right)  g\left(0\right)  \right)  \left(  \sum_{m}I_{j_{2}}^{m}I_{j_{1}}^{m}g_{\varepsilon,i}^{\cdot}\left(  0\right)  \right)  }{\left(  \sum_{m}I_{j_{2}}^{m}I_{j_{1}}^{m}g\left(  0\right)  \right)  ^{2}}\notag\\
         &  =\frac{1}{n_{1}n_{2}}\sum\limits_{j_{2}=1}^{n_{2}}{\sum\limits_{j_{1}=1}^{n_{1}}}\frac{\left(  \sum_{m}I_{j_{2}}^{m}I_{j_{1}}^{m}\psi\left(h_{m}\left(  j_{1}\right)  ,h_{m}\left(  j_{2}\right)  \right)  \right)\left(  \sum_{m}I_{j_{2}}^{m}I_{j_{1}}^{m}\frac{g_{\varepsilon,i}^{\cdot}\left(  0\right)  }{g\left(  0\right)  }\right)  }{\left(  \sum_{m}I_{j_{2}}^{m}I_{j_{1}}^{m}\right)  ^{2}}.\notag
  \end{align}
  Similar expression is immediate for $\hat{U}_{2_{j}},~j=1,\ldots n_{2}$; and finally
  \begin{equation}
    \widehat{SD}\left[  \widehat{AUC}^{\left(  CVKM\right)  }\right]  =\sqrt
    {\frac{1}{n_{1}^{2}}\sum\limits_{i=1}^{n_{1}}{\hat{U}_{1_{i}}^{2}}+\frac
      {1}{n_{2}^{2}}\sum\limits_{j=1}^{n_{2}}{\hat{U}_{2_{j}}^{2}}}.
  \end{equation}

\end{proof}

\begin{figure}[tbh]\centering
  \includegraphics[height=1.5in]{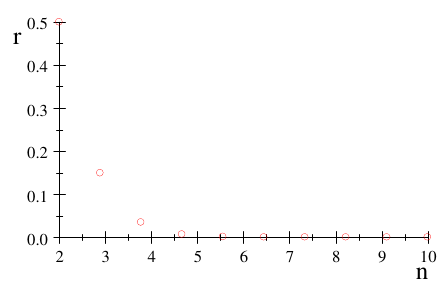}
  \caption{The ratio $r=\binom{n}{n/2}/n^{n}$ is the ratio between the number
    of permutations of both KCV (where $K=2$) and the Bootstrap.}\label{FIGCVBSratio}
\end{figure}

\clearpage

\begin{table*}[tbh] \centering
  \caption{$n_1=n_2=10$}\label{Tablen10}%
  % [inline block 0: 7 envs, 63364 chars -> data_tex | \begin{tabular}     [c]{ccccccccccccc}%...]

\end{table*}%

%%% Local Variables:
%%% mode: latex
%%% TeX-master: "Yousef201XCV-II"
%%% End:

%%% Local Variables:
%%% mode: latex
%%% TeX-engine: default
%%% TeX-master: "Yousef201XCV-II"
%%% End:

\end{document}